\documentclass{article}

\usepackage{microtype}
\usepackage{graphicx}
\usepackage{subcaption}
\usepackage{booktabs} % for professional tables
\usepackage{makecell} % for \Xhline
\usepackage{wrapfig}
\usepackage{hyperref}
\usepackage{xcolor} 
\usepackage{pifont}
\usepackage{subcaption}
\usepackage[preprint]{icml2026}

\usepackage{amsmath}
\usepackage{amssymb}
\usepackage{mathtools}
\usepackage{amsthm}
\usepackage{colortbl} 
\usepackage{multirow}
\usepackage[capitalize,noabbrev]{cleveref}

\theoremstyle{plain}
\newtheorem{theorem}{Theorem}[section]

\newtheorem{lemma}[theorem]{Lemma}

\theoremstyle{definition}

\theoremstyle{remark}
\newtheorem{remark}[theorem]{Remark}

\usepackage[textsize=tiny]{todonotes}

\icmltitlerunning{Exploring Multiple High-Scoring Subspaces in Generative Flow Networks}

\begin{document}

\twocolumn[
  \icmltitle{Exploring Multiple High-Scoring Subspaces in Generative Flow Networks}

  % It is OKAY to include author information, even for blind submissions: the
  % style file will automatically remove it for you unless you've provided
  % the [accepted] option to the icml2026 package.

  % List of affiliations: The first argument should be a (short) identifier you
  % will use later to specify author affiliations Academic affiliations
  % should list Department, University, City, Region, Country Industry
  % affiliations should list Company, City, Region, Country

  % You can specify symbols, otherwise they are numbered in order. Ideally, you
  % should not use this facility. Affiliations will be numbered in order of
  % appearance and this is the preferred way.
  \icmlsetsymbol{equal}{*}

  \begin{icmlauthorlist}
    \icmlauthor{Xuan Yu}{y1}
    \icmlauthor{Xu Wang}{equal,y1,y2}
    \icmlauthor{Rui Zhu}{y1}
    \icmlauthor{Yudong Zhang}{y1,y2}
    \icmlauthor{Yang Wang}{equal,y1,y2,y3}
    %\icmlauthor{}{sch}
    %\icmlauthor{}{sch}
  \end{icmlauthorlist}

  \icmlaffiliation{y1}{University of Science and Technology of China (USTC), Hefei, China}
  \icmlaffiliation{y2}{Suzhou Institute for Advanced Research, USTC, Suzhou, China}
  \icmlaffiliation{y3}{State Key Laboratory of Precision and Intelligent Chemistry, USTC, Hefei, China}

  \icmlcorrespondingauthor{Xu Wang}{wx309@ustc.edu.cn}
  \icmlcorrespondingauthor{Yang Wang}{angyan@ustc.edu.cn}

  % You may provide any keywords that you find helpful for describing your
  % paper; these are used to populate the "keywords" metadata in the PDF but
  % will not be shown in the document
  \icmlkeywords{Machine Learning, ICML}

  \vskip 0.3in
]

% this must go after the closing bracket ] following \twocolumn[ ...
% This command actually creates the footnote in the first column listing the
% affiliations and the copyright notice. The command takes one argument, which
% is text to display at the start of the footnote. The \icmlEqualContribution
% command is standard text for equal contribution. Remove it (just {}) if you
% do not need this facility.

% Use ONE of the following lines. DO NOT remove the command.
% If you have no special notice, KEEP empty braces:
\printAffiliationsAndNotice{}  % no special notice (required even if empty)
% Or, if applicable, use the standard equal contribution text:
% \printAffiliationsAndNotice{\icmlEqualContribution}
\newcommand{\modelname}{{CMAB-GFN}}
\newcommand{\randmodel}{{RandGFN}}
\newcommand{\GFNs}{{GFlowNets}}
\newcommand{\GFN}{{GFlowNet}}

\begin{abstract}
  As a probabilistic sampling framework, Generative Flow Networks (GFlowNets) show strong potential for constructing complex combinatorial objects through the sequential composition of elementary components. However, existing GFlowNets often suffer from excessive exploration over vast state spaces, leading to over-sampling of low-reward regions and convergence to suboptimal distributions. Effectively biasing GFlowNets toward high-reward solutions remains a non-trivial challenge. In this paper, we propose CMAB-GFN, which integrates a combinatorial multi-armed bandit (CMAB) framework with GFlowNet policies. The CMAB component prunes low-quality actions, yielding compact high-scoring subspaces for exploration. Restricting GFNs to these compact high-scoring subspaces accelerates the discovery of high-value candidates, while the exploration of different subspaces ensures that diversity is not sacrificed. Experimental results on multiple tasks demonstrate that CMAB-GFN generates higher-reward candidates than existing approaches.
\end{abstract}

% \begin{abstract}
%   As a probabilistic sampling framework, Generative Flow Networks (GFNs) show strong potential for constructing complex combinatorial objects through the sequential composition of elementary components. However, existing GFNs often suffer from excessive exploration over vast state spaces, leading to over-sampling of low-reward regions and convergence to suboptimal distributions. Effectively biasing GFNs toward high-reward solutions remains a non-trivial challenge. In this paper, we propose CMAB-GFN, which integrates a combinatorial multi-armed bandit (CMAB) framework with GFN policies. The CMAB component prunes low-quality actions, yielding compact high-scoring subspaces for exploration. Restricting GFNs to these compact high-scoring subspaces accelerates the discovery of high-value candidates, while the exploration of different subspaces ensures that diversity is not sacrificed. Experimental results on multiple tasks demonstrate that CMAB-GFN generates higher-reward candidates than existing approaches, without sacrificing diversity. All implementations are publicly available at \url{https://anonymous.4open.science/r/CBFlowNet-E0BA/}.
% \end{abstract}
\section{Introduction}

% gflownet背景
Generative Flow Networks (GFlowNets)~\citep{bengio2021flow,zhang2022generative,cretu2024synflownet} have shown their impressive potential in generating diverse and high-scoring candidates across various domains, especially in generating combinatorial objects~\citep{zhang2023distributional,zhang2023let}. By unifying MDPs’ sequential dynamics with flow-based probability matching, GFlowNets synthesize action trajectories that sample candidates proportionally to the desired reward distribution. 

% issue
Despite this appealing objective, practical GFlowNet training often suffers from inefficient exploration in large combinatorial spaces~\citep{kim2023local,shen2023towards}. In many realistic tasks, the vast majority of states yield negligible reward, while high-reward candidates occupy a vanishingly small fraction of the space. As a result, early and mid-stage training frequently resembles near-uniform exploration over actions, causing most sampled trajectories to terminate in low-value regions. Under limited sampling budgets, this leads to slow discovery of high-reward modes and empirical terminal distributions that underrepresent the most valuable solutions, even when the theoretical objective is reward-proportional sampling.

% 现有方法
Several approaches attempt to increase the \emph{greediness} of GFlowNets, thus biasing to high-reward regions. Temperature scaling modifies the target distribution to $R(x)^\beta$ with $\beta \gg 1$, but choosing $\beta$ introduces instability and risks mode collapse \citep{malkin2022trajectory,lau2024qgfn}. Other methods incorporate Q-values \citep{lau2024qgfn}, local search \citep{kim2023local}, or evolutionary strategies \citep{kim2024genetic}, which bias sampling toward promising regions. However, these techniques typically operate at the \emph{local action level} and pay more attention to exploiting observed high-value regions. Once certain actions are deemed low-value, they are rarely revisited. This may irreversibly suppress actions that appear weak early but are essential for later high-reward compositions. Consequently, exploration can become trapped in a narrow region of the space.

\begin{figure}[t]
    \centering
    \includegraphics[width=0.98\linewidth]{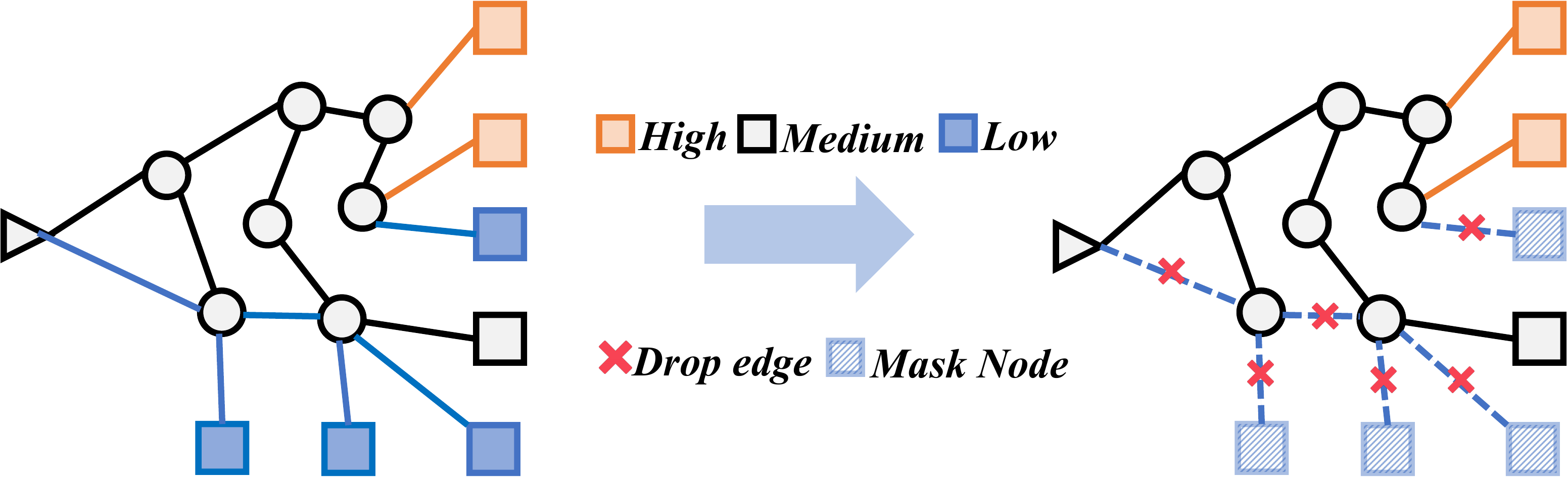}
    \caption{\textbf{Illustration of action pruning}.The triangle means initial state, the circles denote interior states, and the squares denote the terminal states. By pruning low-scoring actions (blue edges), candidates with low rewards(blue nodes) are masked. Candidates with high rewards (Orange ones) are more likely to be explored, addressing the over-exploration of low-reward candidates.}
    \label{fig:prune}
\end{figure}

% observation
Inspired by existing works, we observe that in structured generative domains certain actions are consistently more promising than others. For example, in molecule design, combining specific blocks tends to yield high-scoring candidates more frequently. This suggests a different perspective on exploration: rather than searching uniformly over the entire state space, one may instead search over \emph{subspaces induced by subsets of actions}. This motivates a key intuition that even after aggressively pruning actions, the remaining subspace is still vast, yet may contain a much higher density of high-reward candidates. As shown in Fig.~\ref{fig:prune}, by pruning low-quality actions and retaining promising ones, we can substantially improve sampling efficiency. Based on this, we reformulate GFlowNet exploration as a \emph{higher-level combinatorial decision problem}. Instead of sampling over the full action space at all times, we repeatedly select promising subsets of state-independent actions that define compact, high-scoring subspaces, and train the GFlowNet within those subspaces. Exploration thus occurs at the subspace level rather than solely at the trajectory level.

Formally, we define the pruning process as selecting $K$ actions from a total of $N$ actions to retain, thereby inducing a subspace of the original state space. Different pruning choices correspond to different subspaces, some of which contain significantly denser clusters of high-reward candidates. Importantly, our objective is not to identify a single optimal subspace, but to discover diverse high-scoring subspaces for exploration. Continuously sampling from the same subspace would inevitably increase the similarity of generated candidates, quantitative evidence for this effect is provided in the experiments.

However, identifying such promising subspaces is non-trivial. When $K$ scales with $N$, the number of possible pruning configurations grows combinatorially as $\binom{N}{K}$, rendering exhaustive search infeasible. Under a limited sampling budget, the problem naturally exhibits an exploration–exploitation trade-off: one must decide whether to exploit pruning strategies that have yielded high rewards so far, or to explore alternative pruning configurations that may uncover even better subspaces. This setting can be viewed through the lens of a multi-armed bandit (MAB) problem, where each arm corresponds to an action and rewards are observed only through downstream sampling~\citep{robbins1952some}. Since each pruning decision involves selecting a subset of actions rather than a single action, the problem more precisely aligns with the combinatorial multi-armed bandit (CMAB) framework~\citep{chen2013combinatorial}, where the learner selects combinations of base arms and receives bandit feedback on the resulting base arms. We combine a CMAB framework with GFlowNets, and introduce {\modelname}. By considering actions as base arms in the CMAB problem, we can utilize the CMAB algorithm to select actions that are more likely to lead to high-reward candidates. 

Considering that existing works~\citep{kim2023local,shen2023towards} that increase GFlowNet greediness through local value thresholding often suffer from exploration collapse, our method does not rely on local value thresholding to restrict actions. Instead, we partition the original search space into subspaces defined by super arms and explore them under a CMAB framework. These subspaces are continuously cycled and re-evaluated, ensuring that no region becomes persistently inaccessible. Furthermore, our algorithm explicitly enforces subspace-level exploration, preventing the sampler from collapsing into a single high-scoring subspace. Conceptually, this reflects a shift in perspective: rather than operating directly over the entire state space, we view GFlowNet sampling as a process of discovering high-reward subspaces among a large collection of candidates. This formulation naturally balances exploration and exploitation at the subspace level, avoiding the effective lock-in behavior induced by hard thresholding, and leading to more robust performance.

We evaluate the proposed \modelname\ on several popular tasks used in prior works, including molecule design~\citep{bengio2021flow}, three RNA design tasks~\citep{sinai2020adalead} and bit sequence task~\citep{malkin2022trajectory}. The result demonstrates that the proposed method discovers more high-reward candidates and converges faster than baselines. Our main contributions are:

\noindent \ding{182} We propose a new perspective that views GFlowNet exploration as a combinatorial search over action-induced subspaces rather than uniform exploration of the full state space.

\noindent \ding{183} A novel framework CMAB-GFN is proposed, which integrates combinatorial multi-armed bandits with GFlowNet training to adaptively select high-value subspaces.
    % \item We design a two-phase reward estimation protocol that enables bandit learning under the non-stationary dynamics of evolving GFlowNet policies.

\noindent \ding{184} We demonstrate through extensive experiments that CMAB-GFN substantially improves high-reward mode discovery across molecular, RNA, and synthetic sequence design tasks.

\section{Related Work}
\textbf{GFlowNets:}
Since their introduction by \citet{bengio2021flow}, GFlowNets have advanced rapidly in theory and applications, with recent works establishing connections to variational inference \citep{malkin2022gflownets}, distributional analysis \citep{silva2025gflownets}, proxy-free training in offline settings \citep{chen2025proxy}, and alternative loss designs \citep{hu2024beyond}. They have also been applied to combinatorial optimization tasks, including general problems \citep{zhang2023let}, computation graphs \citep{zhang2023robust}, hierarchical exploration with evolutionary search \citep{kim2024ant}, and multi-objective optimization \citep{zhu2023sample}. In contrast to these approaches, which mainly employ GFlowNets to solve combinatorial problems, our method leverages combinatorial optimization techniques to improve GFlowNets themselves. Complementary efforts have enhanced GFlowNet training through Q-value integration \citep{lau2024qgfn}, local search \citep{kim2023local}, Thompson sampling \citep{rector2023thompson}, replay strategies \citep{vemgal2023empirical,shen2023towards}, genetic and evolutionary algorithms \citep{kim2024genetic,ikram2024evolution}, and adaptive teacher mechanisms \citep{kim2024adaptive}.

\textbf{Combinatorial multi-armed bandit:}
The combinatorial multi-armed bandit (CMAB) framework was first introduced by \citep{chen2013combinatorial}. \citet{chen2016combinatorial} later extended it to nonlinear reward functions dependent on variable distributions. Subsequent work includes the cost-aware auction-based CMAB by \citet{gao2021auction} and full-bandit algorithms by \citet{agarwal2021dart,fourati2024combinatorial}, where no individual arm feedback is available.

\section{Preliminary}
In the classic Combinatorial Multi-Armed Bandit (CMAB) problem~\citep{chen2013combinatorial,chen2016combinatorial}, there are $N$ base arms, each associated with an unknown reward distribution. In each round, the player selects a subset of $K$ base arms, forming a super arm, and receives a joint reward. The goal is to identify the best $K$ arms that maximize the joint reward, which may be a non-linear function of individual arm rewards, while minimizing regret—the gap between the expected reward of always playing the optimal super arm and that of the algorithm’s choices~\citep{chen2013combinatorial, slivkins2019introduction,chen2016combinatorial}. The central challenge lies in balancing exploration (trying diverse super arms to gather information) and exploitation (selecting the current best super arm for higher reward). Notably, the flow network embodies a similar dilemma: some regions of the state space contain dense clusters of high-reward candidates, and the algorithm must decide between probing new promising subspaces and exploiting those already identified.
The feedback models of CMAB can be categorized into two types: 
1) Full-bandit feedback\citep{chen2013combinatorial,fourati2024federated}, where only the aggregate reward of the played super arm is observed, and
2) Semi-bandit feedback\citep{chen2013combinatorial,chen2021combinatorial}, where the individual rewards of each base arm in the super arm are additionally revealed.
Our problem uses \emph{arm-level} feedback in the spirit of the semi-bandit setting: from evaluation samples, we construct per-arm statistics that guide super-arm selection. Importantly, we do \emph{not} observe a physically separate stochastic outcome for each arm as in classical semi-bandits. Instead, arm feedback is a \emph{proxy} derived from terminal rewards (Sec.~\ref{reward disgn}).
\section{Methodology}
In this section, we introduce \textbf{Combinatorial Multi-Armed Bandit GFlowNet ({\modelname})}, a greedy training framework designed to enhance both the quality and diversity of generated candidates based on CMAB. Unlike prior approaches that operate over the entire flow graph~\citep{lau2024qgfn}, {\modelname} achieves a more balanced exploration--exploitation trade-off by selectively focusing on high-scoring subspaces of substantially reduced size.

\subsection{Framework Design}

\subsubsection{Design of Base and Super Arms}
The core challenge in applying CMAB to GFlowNets is to define a set of base arms whose combinations (super arms) can meaningfully constrain the exploration space. A naive approach would be to treat every possible state transition $s\rightarrow s'$ as a distinct arm. However, this leads to an intractably large and state-dependent set of arms, making the CMAB problem ill-posed.

To overcome this, we observe that in many sequential generation tasks, actions can be decomposed into two components: A \textbf{state-dependent} component $a_d$ that determines where to act (e.g., which position in a sequence to fill, which molecular stem to extend); A \textbf{state-independent} component $a_i$ that determines what action to take, regardless of the specific state (e.g., which value to assign to a position, which building block to attach).
\begin{equation}
\mathcal{A}_i = \{a_i \mid (a_d, a_i) \in \mathcal{A}_s, \forall s \in \mathcal{S}\}.
\end{equation}

$A_s$ denotes the available transitions(action set) of state $s$. Intuitively, $A_i$ represents the "alphabet" of primitive choices available throughout the generative process. A super arm $\mathbb{S} \subseteq A_i $ is then a subset of this alphabet. By selecting a super arm, we restrict the GFlowNet policy such that at any state $s$, it can only take actions $(a_d,a_i)$ where $a_i \in \mathbb{S}$. This effectively prunes all actions that use primitive choices outside $\mathbb{S}$. Task Examples are given in Table~\ref{tab:action-decomposition}.

\begin{table*}[t]
\centering
\caption{Characteristics of four evaluation tasks from three perspectives.}
\resizebox{\textwidth}{!}{
    \begin{tabular}{lccc}
    \Xhline{1.0pt}
    \rowcolor{lightgray!20} \textbf{Task} & \textbf{State-dependent Component ($a_d$)} & \textbf{State-independent Component ($a_i$, Base Arm)} & \textbf{$\mathcal{A}_i$} \\
    \Xhline{1.0pt}
    Bit Sequence Generation & Position to edit & Binary value to assign & $\{0, 1\}$ \\
    \rowcolor{lightgray!20} Molecule Design         & Stem to extend   & Building block to attach & Vocabulary of 105 blocks \\
    RNA Sequence Design     & Prepend or Append & Nucleotide to add & $\{A, C, G, U\}$ \\
    \Xhline{1.0pt}
    \end{tabular}
}
\label{tab:action-decomposition}
\end{table*}

\subsubsection{Design of Rewards for Base Arms}
\label{reward disgn}

We now turn to the assignment of rewards to base arms, a critical factor for ensuring stable learning.  
Ideally, to evaluate the quality of a base arm $i$, we define its expected reward as the average reward across \emph{all} states in the entire state space that contain this base arm:
\begin{equation}
\mu_i = \mathbb{E}_{x \in \mathcal{X}_i}[r(x)],
\end{equation}
where $\mathcal{X}_i$ denotes the set of all possible states containing base arm $i$, and $r(x)$ is the normalized reward of state $x$. This definition aligns with the CMAB assumption that the reward distribution of a base arm is intrinsic to the arm itself and remains invariant across different super arms.

In practice, however, we cannot enumerate the entire state space $\mathcal{X}_i$ due to its exponential size. Instead, we estimate the reward of base arm $i$ using only the sampled candidates. In the semi-bandit setting with $m$ base arms, the reward of base arm $i$ at round $t$ is approximated as
\begin{equation}
X_i^t = \frac{1}{|C_i^t|}\sum_{x \in C_i^t} r(x),
\label{eq-arm-reward}
\end{equation}
where $C_i^t$ denotes the set of candidates \emph{sampled} at round $t$ that contain base arm $i$, and
\begin{equation}
r(x) = \text{normalize}(R(x))
\end{equation}
is the normalized reward of candidate $x$ with raw reward $R(x)$ from the environment. In our implementation, we use \emph{global min--max normalization} to constrain rewards into $[0,1]$:
\begin{equation}
r(x)=\mathrm{clip}\Bigg(\frac{R(x)-R_{\min}}{R_{\max}-R_{\min}+\epsilon}, 0, 1\Bigg),
\end{equation}
where $R_{\min}$ and $R_{\max}$ are the global minimum and maximum raw rewards observed so far (updated online during sampling), and $\epsilon$ is a small constant for numerical stability.

To ensure that our empirical estimate $X_i^t$ converges to the true expected reward $\mu_i$, we adopt a two-phase sampling strategy, which \emph{asymptotically} preserves the distributional consistency required by CMAB as the flow network accumulates diverse training experience (Theorem~\ref{thm:2}):

% \begin{itemize}[leftmargin=2em,itemsep=-0.1em]

\noindent \ding{182} \textbf{Constrained training.} Train the flow network restricted to $\mathbb{S}$.

\noindent \ding{183} \textbf{Unrestricted evaluation.} Sample candidates without restrictions, and compute $X_i$ as the average reward over all candidates containing arm $i$.

This procedure increases the sampling cost but does not increase network training complexity, while asymptotically ensuring unrestricted reward estimates as the flow network accumulates diverse training experience. Moreover, the extra cost does not scale linearly with training time, since the evaluation stage of deep networks remains unaffected. With multi-threaded sampling, the overhead can be further reduced. A detailed comparison of time consumption is reported in Table~\ref{tab:timecost}. It is important to note that the samples generated in the unrestricted evaluation phase are used only for updating bandit statistics and are \emph{not} included in the final evaluation metrics reported in our experiments. 

\subsubsection{Design for Adjusting Network Structure}\label{sec:adjust-network}
We further propose a method to address the challenge that arises when the independent action space is small but trajectories are long, resulting in narrow yet deep networks.

\begin{figure}[t]
    \centering
    \includegraphics[width=0.98\linewidth]{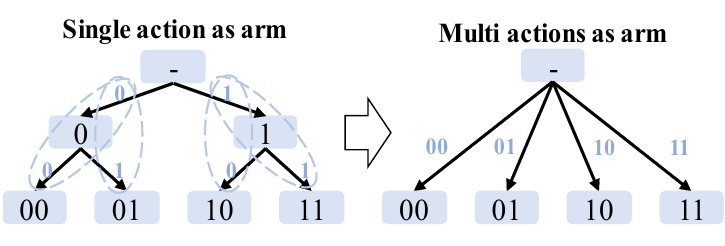}
    \caption{Using short action sequences as arms to transform the narrow-deep network architecture into a more balanced wide structure.}
    \vspace{-15pt}
    \label{fig:multi-actions}
\end{figure}
For instance, in bit sequence generation, there are only two independent actions, ${0,1}$, while a complete sequence may require over 100 steps. This leads to a deep but fragile network, where pruning even a single action can collapse the entire solution space.

To address this, we redefine base arms as short sequences of $t$ consecutive actions:
\[
a_1 \rightarrow a_2 \rightarrow \cdots \rightarrow a_t.
\]
Super arms then consist of sets of such sequences. A sub-trajectory is valid if its independent subsequence belongs to the chosen super arm.  
This widens the effective search space and balances the architecture, while keeping the CMAB formulation applicable (see Fig.~\ref{fig:multi-actions}). 

\subsubsection{Dynamic Nature of Flow Networks}
Unlike standard CMAB problems with stationary distributions, flow networks evolve during training. It is therefore important to characterize this non-stationarity. The fundamental constraint for an ideal flow network is:
\begin{equation}
    \pi(x) = \frac{R(x)}{\sum_{x' \in \mathcal{X}} R(x')}, \quad \forall x \in \mathcal{X},
\end{equation}
where \( \pi(x) \) represents the target distribution and \( R(x) \) denotes the reward function. However, achieving this equilibrium condition requires exhaustive exploration of all states, which is impractical in real-world scenarios. Consequently, the expected rewards of individual base arms evolve dynamically throughout the training process of the flow network. Nevertheless, under a mild policy-stabilization condition (i.e., when the GFlowNet parameters eventually converge or change sufficiently slowly), the induced arm reward distributions stabilize in late training.
In our analysis, we therefore separate two aspects: (i) \emph{TB consistency}---if the TB equations are solved (or nearly solved), the induced terminal distribution matches (or approximates) the reward-proportional target; and (ii) \emph{training non-stationarity}---during optimization, the policy drifts, so arm reward statistics can be non-stationary. Our convergence statements below are correspondingly stated under a policy-stabilization condition, and in practice we use sliding-window estimates to track residual drift.

\textbf{Theorem 1 (Convergence of Reward Distributions).}
Under standard regularity conditions (bounded rewards, positive exploration, appropriate learning rates, and finite state space), the reward distribution of each base arm $i$ converges to a stable distribution as the GFlowNet policy converges. The formal statement with precise assumptions and the complete proof are provided in Appendix~\ref{Proof 1}.

\textbf{Theorem 2 (Asymptotic Approximate Distributional Invariance).}
\label{thm:2}
Under the two-phase protocol, if the evaluation-time conditional sampling distributions become asymptotically insensitive to which super arm was used in the preceding restricted training phase (e.g., due to diminishing policy drift / two-time-scale updates), then the arm-level reward statistics used by CMAB become approximately invariant across super arm selections in late training. The formal condition and proof are provided in Appendix~\ref{Proof 2}.

\subsubsection{CUCB Algorithm for \modelname}
We integrate our framework into the \textit{Combinatorial Upper Confidence Bound (CUCB)} algorithm (Algorithm\ref{alg:CUCB}). To better handle the non-stationary rewards induced by ongoing flow training (Section above), we compute bandit statistics using a \textbf{sliding window} of the most recent $H$ unrestricted evaluation rounds (i.e., the feedback obtained from \textsc{gflownet.gen}($ALL$), aggregated every $I$ rounds if interval aggregation is used). Concretely, for each arm $i$, we maintain a FIFO buffer $\mathcal{B}_i$ that stores up to $H$ most recent observed arm rewards $X_i$ (Eq.~\ref{eq-arm-reward}); we then set $\hat{\mu}_i=\frac{1}{|\mathcal{B}_i|}\sum_{x\in\mathcal{B}_i} x$. The UCB-adjusted estimate is
\begin{equation}
\overline{\mu}_i = \hat{\mu}_i + \sqrt{\tfrac{3\ln t}{2T_i}},
\label{eq-4}
\end{equation}

where $t$ is the round index. This estimate ensures a principled balance between exploration and exploitation. Aside from the CUCB algorithm, there exist several other CMAB algorithms, such as Combinatorial Thompson Sampling (CTS)~\citep{wang2018thompson} and Efficient Sampling for Combinatorial Bandits (ESCB)~\citep{kveton2015tight}. Our framework is algorithm-agnostic and works with other CMAB methods like CTS and ESCB.

% \noindent\textbf{Remark (Non-stationarity).} The sliding-window estimator is a practical mechanism widely used in non-stationary bandits to trade off bias (tracking drift) and variance (finite samples). We do not claim a general convergence guarantee for the joint CMAB--GFlowNet dynamics without additional assumptions (e.g., bounded per-round drift / two-time-scale updates). We empirically study stability w.r.t.\ $H$ in Appendix and treat $H$ as a robustness knob.

\begin{algorithm}[!h]
\caption{Combinatorial Upper Confidence Bound (CUCB) with Flow Network}
\label{alg:CUCB}
\renewcommand{\algorithmicrequire}{\textbf{Maintain:}}
\renewcommand{\algorithmicensure}{\textbf{Inputs:}}
\begin{algorithmic}[1]
\REQUIRE $\mathcal{B}_i$: buffer of recent rewards for arm $i$; $T_i = |\mathcal{B}_i|$: sample count; $\hat{\mu}_i$: empirical mean
\ENSURE $m$: \# base arms; $K$: super arm size; $T$: \# training rounds; $H$: window size
\STATE \textbf{// Initialization:}
\WHILE{$\exists i \in \{1,\dots,m\} \text{ with } T_i = 0$}
    \STATE \textsc{gflownet.train}($ALL$); Receive feedback from \textsc{gflownet.gen}($ALL$)
    \STATE Compute and update arm rewards $\{X_i\}$, $\mathcal{B}_i$, $T_i$, $\hat{\mu}_i$
\ENDWHILE
\STATE \textbf{// Main Loop:}
\FOR{$t = m \rightarrow T$}
    \STATE Compute UCB: $\overline{\mu}_i \gets \hat{\mu}_i + \sqrt{\frac{3\ln t}{2T_i}}$; Select $\mathbb{S} = \text{select}(\overline{\mu}_1,\dots,\overline{\mu}_m)$
    \STATE \textsc{gflownet.train}($\mathbb{S}$); Receive feedback from \textsc{gflownet.gen}($ALL$)
    \STATE Compute and update arm rewards $\{X_i\}_{i\in\mathbb{S}}$, $\mathcal{B}_i$, $T_i$, $\hat{\mu}_i$
\ENDFOR
\end{algorithmic}
\end{algorithm}

Algorithm~\ref{alg:CUCB} integrates our framework with CUCB. To handle non-stationary rewards from ongoing training, we use a \textbf{sliding window} of the $H$ most recent evaluation rounds. The UCB estimate balances exploration and exploitation.

For super arm selection, we adopt a \textbf{co-occurrence weighted greedy strategy}~\citep{kabbur2013fism}. We maintain a matrix $W \in \mathbb{R}^{N \times N}$ where $W_{ij}$ records reward-weighted co-occurrence of arms $i$ and $j$:
\begin{equation}
W_{ij} \leftarrow (1-\alpha) W_{ij} + \alpha \cdot \mathbf{1}[i, j \in x] \cdot r(x).
\end{equation}
During selection, we choose the highest UCB arm as seed, then greedily add arms maximizing:
\begin{equation}
\text{score}(a | \mathbb{S}) = \overline{\mu}_a + \lambda \cdot \frac{1}{|\mathbb{S}|} \sum_{b \in \mathbb{S}} W_{ab},
\end{equation}
where $\lambda$ balances individual quality and co-occurrence synergy. This favors arms with both high UCB and strong synergistic patterns.

% \section{Methodology}

% In our framework, the reward of a super arm is defined as the cumulative reward of all arms \( i \in \mathbb{S} \) at round \( t \), i.e., \( \sum_{i \in \mathbb{S}} X_i \). The expected reward satisfies \( r_{\mu}(\mathbb{S}) = \mathbb{E}[R_t(\mathbb{S})] \). In this setting, the bounded smoothness function simplifies to \( f(x) = Kx \), where \( K \) is the size of the super arm. This assumption ensures that small perturbations in individual arm rewards do not lead to disproportionate changes in the super arm's overall reward, thereby maintaining stability during training.

\section{Experiment}
%方法
We experimented on 5 commonly used standard tasks. As baselines, we use Trajectory Balance (TB) \citep{malkin2022trajectory}, Sub-Trajectory Balance (SUBTB) \citep{madan2023learning}, Detailed Balance (DB) \citep{jain2022biological,malkin2022trajectory}, LSGFN \citep{kim2023local}, Teacher \citep{kim2024adaptive} and QGFN \citep{lau2024qgfn}. All experiments are conducted on NVIDIA Tesla A100 80GB GPUs. Our evaluation centers on four key research questions (\textbf{RQs}):
\ding{46} \textbf{RQ1. How is the quality of candidates generated by {\modelname}? }
\ding{46} \textbf{RQ2. How diverse are the generated candidates? }
\ding{46} \textbf{RQ3. Why CMAB algorithms work for {\GFNs}? }
\ding{46} \textbf{RQ4. How robust is the proposed method across different hyperparameters?}
For RQ3, we performed further analysis by reporting the empirical mean rewards of each base arm in the Bit Sequence Generation task. We also conducted ablation studies, replacing the CMAB framework with alternative strategies such as Hard Pruning and Proportional Pruning to assess their effects in Appendix~\ref{strategies}. We introduce a baseline called \textbf{RandGFN} in all experiments, which replaces CMAB with random pruning. 

\noindent\textbf{Computational Fairness.} To ensure fair comparison, all methods are trained with the same number of rounds and receive the same number of training samples per round for gradient updates. While {\modelname}'s two-phase protocol incurs additional environment calls for CMAB arm selection (e.g., 100 evaluation samples vs.\ 400 training samples per round in our molecule experiments), these evaluation samples serve \textit{exclusively} for arm selection and do \textit{not} contribute to GFlowNet training. Consequently, all methods receive equivalent training signals, and the additional cost lies solely in reward evaluations. Wall-clock time and GPU memory consumption are reported in Appendix~\ref{time-memory}, demonstrating that the evaluation overhead remains modest relative to the performance gains. Following standard practice in GFlowNet literature~\citep{bengio2021flow,malkin2022trajectory}, we assume access to an efficient reward function or proxy model that evaluates generated candidates at negligible cost.

\begin{figure}[htbp]
    \centering
    \includegraphics[width=0.95\linewidth]{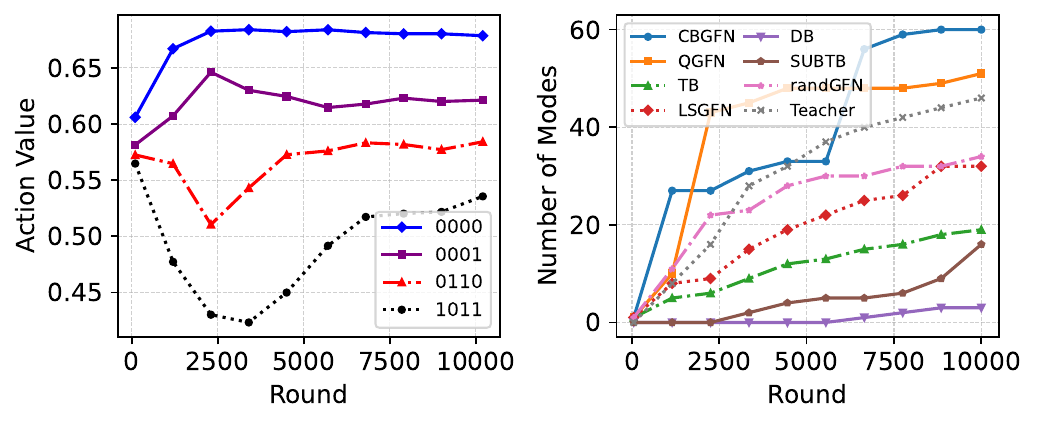}
    \caption{\textbf{Experimental results on Bit Sequence task.} Left panel shows how different action values change as the training progresses, where action values refer to the empirical mean rewards for each base arm, corresponding to $\hat{\mu_i}$ in Eq.~\ref{eq-4}. Right panel shows the mode discovered by different methods.}
    \label{fig:bit-seq-all}
\end{figure}
\subsection{Bit Sequence Generation}
\subsubsection{Task Definition}
The task is to generate binary bit sequences using the set $\{0, 1\}$ with a fixed length $n=120$ with a terminal state space of $2^{120} \approx 10^{36}$ and more intermediate states. The reward of a terminal $x$ is defined as $R(x) =  \exp(-\min_{m \in M}\mathrm{dist}(x,m))$, where $\mathrm{dist}(x,m)$ is the Levenshtein Distance of two sequences following \citep{malkin2022trajectory,zhang2023distributional}. $M$ is a predefined sequence set to be discovered as modes. The mode $m\in M$ is regarded as found if there exists a sample $x$ satisfying $\mathrm{dist}(x,m)<\delta$, where $\delta$ is a predefined parameter.

In this task, we consider a more complex version with many more intermediate states. \citet{malkin2022trajectory} considers the process as a left-to-right generation where the state space is only a simple tree. \citet{lau2024qgfn,shen2023towards} use a prepend-append MDP to induce a DAG. In our setting, instead of prepending or appending to the existing sequence, we first divide the sequence into $\lfloor \frac{n}{k} \rfloor$ positions. We can insert a generated $k$-bit into any unfilled position, resulting in a more complex DAG. 

\subsubsection{Result}
The performance comparison of different methods on the bit sequence task is illustrated in Figure \ref{fig:bit-seq-all}. {\GFNs} employing TB objective demonstrate superior results, outperforming all other objective functions. {\modelname} successfully identifies all potential modes, representing a substantial advancement in mode discovery efficiency. 
Figure \ref{fig:bit-seq-all}-a illustrates the evolution of \(\mu_i\) for various base arms (actions). As training progresses with the CUCB algorithm's action selection, the arms diverge, converging to distinct outcomes. Notably, the actions \{0000, 1111\} emerge as the highest-scoring, aligning with the construction of \(M\), where \{0000, 1111\} appear most frequently compared to other actions.

% \begin{table}[t]
%     \centering
%     \scriptsize
%     \caption{Comparison on Bit Sequence Generation. 
%     \textit{Modes} means the number of discovered modes. 
%     \textit{Top1000} denotes the average reward of the best 1000 candidates.}
%     \label{table-bit}
%     \begin{tabular}{ccc}
%         \toprule
%         Model & Modes$\uparrow$ & Top1000$\uparrow$ \\
%         \midrule
%         \textbf{{\modelname}} & \cellcolor[rgb]{.851,.851,.851}\textbf{60} 
%                         & \cellcolor[rgb]{.851,.851,.851}\textbf{3.60} \\
%         QGFN  & 51 & 3.42 \\
%         LSGFN & 32 & 2.98 \\
%         TB    & 19 & 2.89 \\
%         SUBTB & 16 & 2.84 \\
%         DB    & 3  & 2.66 \\
%         Teacher & 46 & 3.17 \\
%         {\randmodel} & 32 & 3.06 \\
%         \bottomrule
%     \end{tabular}
% \end{table}

% \begin{figure} [ht]
%    \centering
%    \subfigure[Number of modes with reward $>7.5$\label{fig:m-s-75}]{
    %       \includegraphics[width=0.48\textwidth]{figures/m_75_scaffold_add.pdf}}
    %    \subfigure[Number of modes with reward $>8$\label{fig:m-s-80}]{
        %       \includegraphics[width=0.48\textwidth]{figures/m_80_scaffold_add.pdf}}
        %    \caption{\textbf{The curve of number of modes varying with rounds ($10^3$) on Molecule Design.}}
        %    \label{fig:m-m}
        % \end{figure}

        \subsection{Molecule Design}
        \subsubsection{Task Definition}

        \begin{figure}[htbp]
            \centering
            \includegraphics[width=0.95\linewidth]{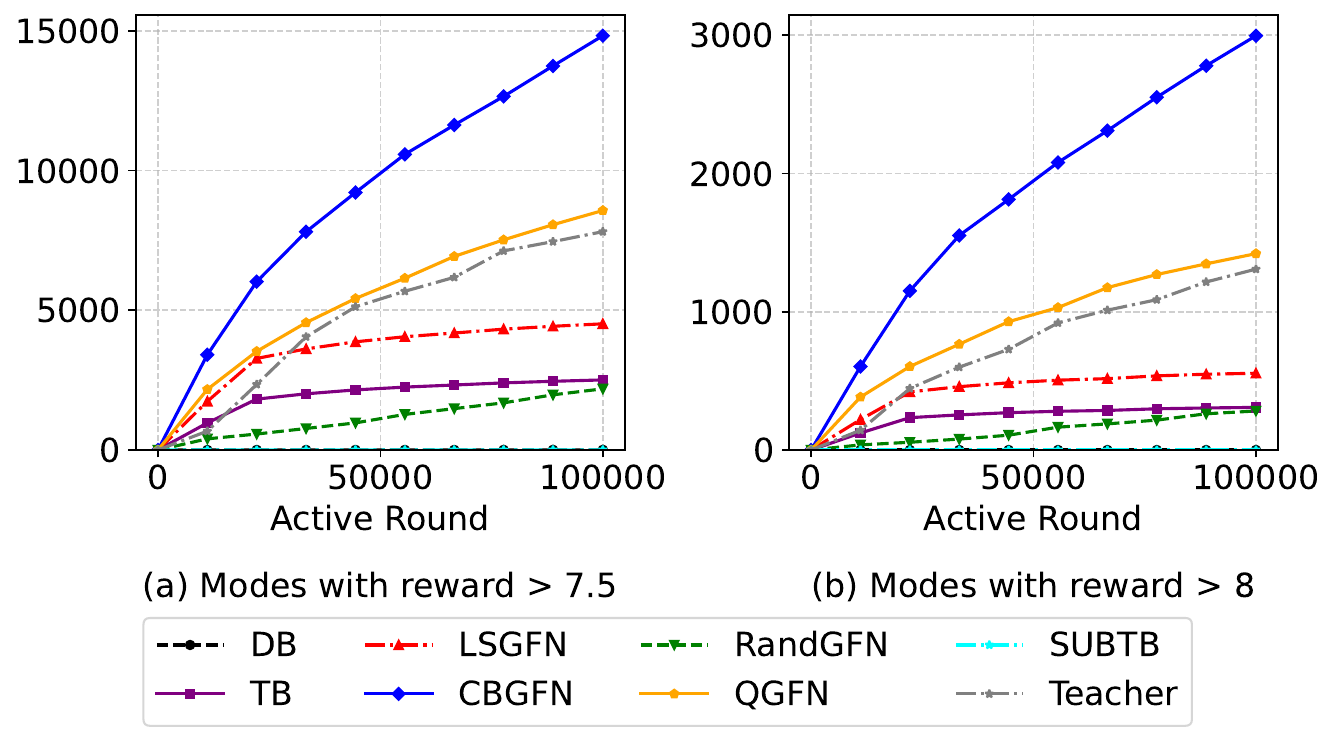}
            \caption{\textbf{The curve of number of modes varying with rounds ($10^3$) on Molecule Design.} Left panel shows the number of modes discovered with a reward $R>7.5$. Right panel shows the number of modes discovered with a reward $R>8$.}
            \label{fig:mols}
        \end{figure}
        
        % \begin{figure}[ht]
        %     \centering
        %     \includegraphics[width=\linewidth]{figures/rna1.pdf}
        %     \caption{Performance comparison on RNA-1 design tasks.}
        %     \label{fig:rna}
        % \end{figure}
        % \begin{figure}[ht]
        %     \centering
        %     \includegraphics[width=\linewidth]{figures/rna2.pdf}
        %     \caption{Performance comparison on RNA-2 design tasks.}
        %     \label{fig:rna}
        % \end{figure}
        % \begin{figure}[ht]
        %     \centering
        %     \includegraphics[width=\linewidth]{figures/rna3.pdf}
        %     \caption{Performance comparison on RNA-3 design tasks.}
        %     \label{fig:rna}
        % \end{figure}
        \begin{table}[htbp]
            \centering
            \scriptsize
            \caption{Comparison on Molecule Design. 
            \textit{Modes R$>$7.5/8} means the number of modes with a reward bigger than 7.5/8. 
            \textit{Top1000-perf} denotes the average reward of the best 1000 candidates. \textit{Top1000-similarity} denotes the tanimoto similarity of the best 1000 candidates. Best result marks exclude results with less than 1000 discovered high-scoring modes.}
            \label{table-mols}
            \begin{tabular}{ccccc}
                \toprule
                \multirow{2}{*}{Model} & \multirow{2}{*}{Modes R$>$7.5 $\uparrow$} & \multirow{2}{*}{Modes R$>$8 $\uparrow$} & \multicolumn{2}{c}{Top1000} \\
                \cmidrule(lr){4-5}
                & & & Perf$\uparrow$ & Similarity$\downarrow$ \\
                \midrule
                \rowcolor[rgb]{.851,.851,.851}
                \textbf{{\modelname}} & \textbf{14913
                } & \textbf{3207} & \textbf{8.436} & 0.49 \\
                QGFN  & 8567 & 1420 & 8.395 & 0.54\\
                LSGFN & 4514 & 555  & 8.316 & 0.53\\
                TB    & 2507 & 308  & 8.233 & 0.47\\
                SUBTB & 6    & 0    & 7.245 & 0.50\\
                DB    & 6    & 0    & 7.124 & 0.44\\
                Teacher & 7811 & 1308 & 8.364 & 0.49\\
                {\randmodel} & 2188 & 282 & 8.248 & \textbf{0.46}\\
                \bottomrule
            \end{tabular}
        \end{table}
We consider the most common scenario for GFlowNets, the fragment-based molecule generation task. The objective is to design a variety of molecules with a high reward, where the reward is given by a proxy model predicting the binding affinity to the sEH (soluble epoxide hydrolase) protein based on a docking prediction\citep{trott2010autodock}. We use the proxy model provided by \citep{bengio2021flow}.

In this task, the states are represented as molecule graphs or SMILES. The action space consists of two components: selecting which molecular stems to extend and choosing which building blocks to add. The maximum number of allowed blocks controls the size of the state space. The vocabulary of building blocks consists of 105 distinct elements, where a block has several possible attachment points(stems). We generate a molecule graph of up to 8 fragments. Therefore, the terminal state space is more than $105^8 \approx 10^{16}$.

We define each block as a base arm and choose $K$ blocks as a super arm. There are $C_{105}^K$ different base arms. We use Tanimoto similarity \citep{bender2004molecular} to distinguish different modes, with a threshold of $0.7$. Furthermore, we conduct a comprehensive analysis to examine how different values of K affect the algorithm's performance, particularly in terms of controlling its greediness.

\subsubsection{Result}
The comparative performance of various models is summarized in Table \ref{table-mols} and Figure \ref{fig:mols}. We observe
that the baseline SUBTB consistently underperforms in both mode discovery and the generation of
high-reward candidates. This pattern is not unique to our experiments; a
similar trend was reported in the molecule design task by \citet{lau2024qgfn}, suggesting that SUBTB
may face intrinsic challenges in effectively exploring complex reward landscapes. Our proposed {\modelname} demonstrates remarkable improvements in high-scoring mode discovery. During our experiments, the model successfully identified over 10,000 high-scoring modes $(R>7.5)$ with 100,000 training rounds. Furthermore, {\modelname} achieves superior performance in terms of average reward for the top 1000 candidates, outperforming all baseline methods.

Table~\ref{table-mols} reports the Tanimoto similarity of the Top-1000 candidates. While sampling within a fixed pruned subspace can increase similarity by concentrating probability mass on a narrow region, our CMAB controller repeatedly re-selects the super arm (Alg.~\ref{alg:CUCB}, line 11), effectively mixing samples across different subspaces. As a result, {\modelname} achieves the best Top-1000 reward while keeping similarity competitive (0.49), substantially lower than other pruning-based baselines such as QGFN (0.54), suggesting that the gain in quality does not come from collapsing to near-duplicate candidates.

\begin{figure}[htbp]
    \centering
    \includegraphics[width=0.92\linewidth]{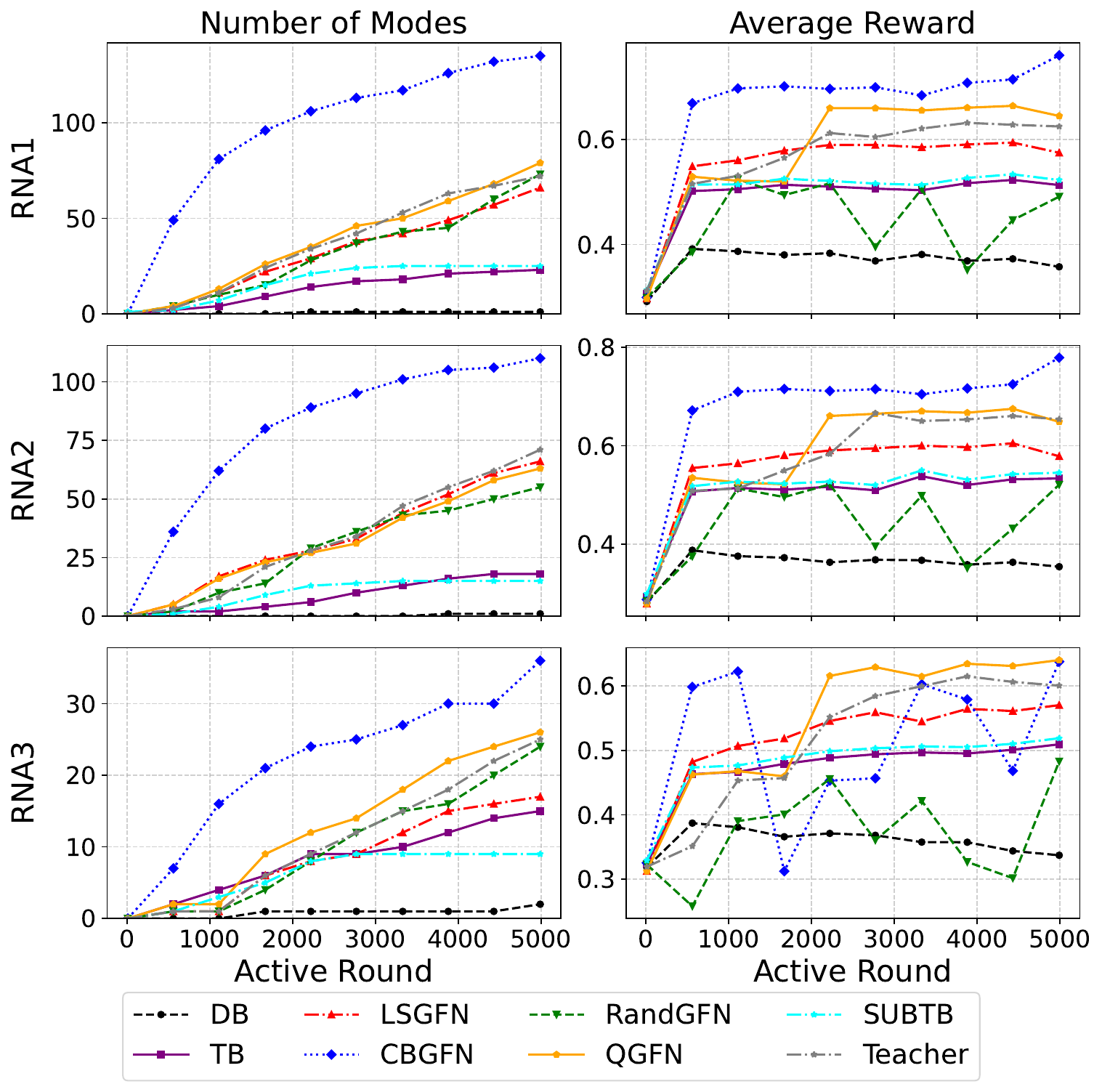}
    \caption{\textbf{Performance comparison on RNA design tasks.} Rows correspond to RNA-1, RNA-2, and RNA-3, respectively.}
    \label{fig:rna}
\end{figure}
\subsection{RNA-Binding}
\begin{figure*}[htbp]
    \centering
    \includegraphics[width=\linewidth]{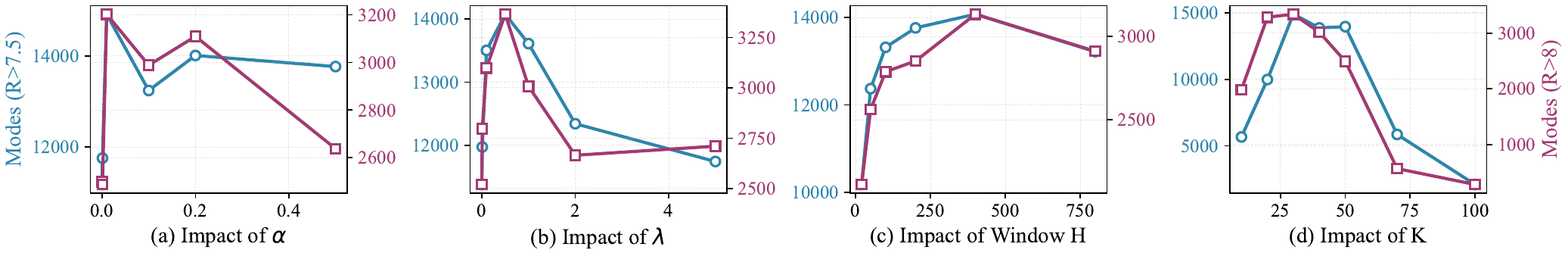}
    \caption{\textbf{Ablation study on key hyperparameters.} We evaluate sensitivity to super arm size $K$, co-occurrence update rate $\alpha$, co-occurrence weighting coefficient $\lambda$, and sliding-window size $H$ on the molecule generation task. Each panel shows the number of discovered modes ($R>7.5$) and discovered modes ($R>8$).}
    \label{fig:ablation_combined}
\end{figure*}
\subsubsection{Task Definition}
The task is to generate a string of 14 nucleobases. We use a prepend-append MDP to keep adding tokens to a string until it reaches the maximum length, following \citet{kim2023local}. There are 4 tokens: adenine (A), cytosine (C), guanine (G), and uracil (U). We conducted experiments on three different target transcriptions: L14-RNA1, L14-RNA2, and L14-RNA3 proposed by \citet{sinai2020adalead}. We treat each token as a base arm, and $K$ is set to $2/3$, denoting that we can either choose $2$ base arms or $3$ base arms as a super arm.

\subsubsection{Result}
Figure~\ref{fig:rna} reports the results on three RNA tasks, each evaluated by Number of Modes Discovered and Average Reward. {\modelname} consistently outperforms baselines: it discovers nearly twice as many modes as the strongest competitor and achieves higher average and Top-1000 rewards in Tasks 1 and 2, while remaining competitive in Task 3. The slight instability in Task 3 stems from averaging over only 10 rounds, during which the agent may explore subspaces with suboptimal rewards or insufficiently learned dynamics.
We further analyze the impact of the reward exponent $\beta$ (inverse temperature) on RNA tasks in Section~\ref{sec:beta_rna}.

\subsection{Ablation Study and Sensitivity Analysis}
\label{sec:hyper_sensitivity}
Our CMAB-guided sampling introduces several key hyperparameters: the super arm size $K$, the co-occurrence update rate $\alpha$ (Eq.~10), the co-occurrence weighting coefficient $\lambda$ (Eq.~11), and the sliding-window size $H$ (Algorithm~\ref{alg:CUCB}). To assess their impact, we conduct systematic ablation studies on the molecule generation task.

Figure~\ref{fig:ablation_combined} demonstrates that {\modelname} exhibits robust performance across reasonable hyperparameter ranges. \textbf{Super arm size $K$} is a fundamental parameter determining how many base arms are retained in each subspace. When $K$ is too small (e.g., $K=10$), the search space becomes overly constrained, limiting diversity and reducing the number of discovered high-reward modes. Conversely, when $K$ is too large (e.g., $K=100$), the subspace pruning becomes less effective, and the framework approaches the behavior of vanilla GFlowNets. Moderate values ($K \in [20, 50]$) strike an optimal balance, keeping the subspace sufficiently compact to guide exploration toward high-reward regions while preserving enough diversity to discover multiple distinct modes. \textbf{Co-occurrence update rate $\alpha$} controls the smoothness of exponential moving averaging in $W$: moderate values ($\alpha \in [0.01, 0.1]$) maintain stable performance, while extreme values (0 or 0.2) show marginal degradation. \textbf{Co-occurrence weighting $\lambda$} interpolates between individual arm quality (UCB) and pairwise synergy ($W$): setting $\lambda=0$ slightly underperforms, while moderate values (0.25--1.0) consistently improve performance by exploiting synergistic combinations. Excessively large $\lambda$ (2.0) degrades results by over-emphasizing spurious correlations. \textbf{Sliding-window size $H$} balances bias and variance: very small windows ($H=50$) suffer from high-variance estimates, while moderate windows ($H = 400$) maintain stable performance. 

Overall, the method exhibits smooth performance surfaces with no brittle failure modes. Hyperparameter tuning primarily affects the speed and stability of bandit adaptation rather than qualitative behavior. For new tasks, we recommend $K$ approximately 40--60\% of the total number of base arms $N$, $\alpha \in [0.01, 0.1]$, $\lambda \in [0.25, 1.0]$, and $H$ covering a few hundred evaluation rounds as a robust starting point.

\begin{table}[h]
\centering
\scriptsize
\caption{Comparison of CMAB methods on molecule generation task. Results shown as percentages relative to CUCB baseline (100\%).}
\label{tab:cmab_comparison}
\begin{tabular}{lccc}
\toprule
Method & Modes $R>7.5$ & Modes $R>8$ & Top-K Reward \\
\midrule
CUCB (baseline) & 100.0\% & 100.0\% & 100.0\% \\
CTS  & 93.5\% & 85.0\% & 99.9\% \\
ESCB & 103.8\% & 103.4\% & 100.1\% \\
\bottomrule
\end{tabular}
\end{table}

We conduct an additional experiment where we replaced CUCB with other CMAB algorithms. Table~\ref{tab:cmab_comparison} reports the results as percentages relative to CUCB (baseline = 100\%). All three methods achieve competitive performance, with ESCB showing slight improvements in mode discovery while CTS underperforms moderately.

\section{Conclusion and Limitation}
\textbf{Conclusion:} This paper proposes {\modelname}, a method designed to enhance the greediness of the sampling process while preserving the diversity of generated candidates. We begin by partitioning the entire state space into multiple subspaces. Next, we employ the CUCB algorithm to effectively balance exploration and exploitation and find the optimal subspaces. To address the challenges posed by narrow-deep network architectures, we propose techniques to transform them into more balanced wide-deep structures. Experimental results across various tasks demonstrate the effectiveness and efficiency of {\modelname}.

\textbf{Limitations:} 
% While the proposed method demonstrates promising results, it incurs additional computational overhead compared to vanilla GFlowNets due to the required sampling process. 
Although our framework is theoretically applicable to tasks like listwise recommendation and combinatorial optimization problems, empirical validation on these tasks remains for future work. 
% Furthermore, while we address non-trivial challenges in our study, a comprehensive analysis of parameter scaling and computational efficiency across the benchmarked methods has not been explored. 
Another limitation of the proposed method is that it assumes a fixed reward distribution in the environment. In scenarios where the high-reward state space shifts during training, the benefits are limited and may even disappear. 
\label{limitation}
\nocite{langley00}

\bibliography{example_paper}
\bibliographystyle{icml2026}

%%%%%%%%%%%%%%%%%%%%%%%%%%%%%%%%%%%%%%%%%%%%%%%%%%%%%%%%%%%%%%%%%%%%%%%%%%%%%%%
%%%%%%%%%%%%%%%%%%%%%%%%%%%%%%%%%%%%%%%%%%%%%%%%%%%%%%%%%%%%%%%%%%%%%%%%%%%%%%%
% APPENDIX
%%%%%%%%%%%%%%%%%%%%%%%%%%%%%%%%%%%%%%%%%%%%%%%%%%%%%%%%%%%%%%%%%%%%%%%%%%%%%%%
%%%%%%%%%%%%%%%%%%%%%%%%%%%%%%%%%%%%%%%%%%%%%%%%%%%%%%%%%%%%%%%%%%%%%%%%%%%%%%%
\newpage
\appendix
\onecolumn

\section{LLM Usage Statement}
We only use LLMs as a language optimization tool to polish sentences, improving their readability and fluency. The LLM did not contribute to the scientific ideas, algorithm design, or experimental setup. All substantive content, reasoning, and conclusions are entirely the product of the authors.   We accept full responsibility for all content in the paper, including parts refined or corrected by the LLM, and affirm that no text generated by the LLM constitutes original scientific contributions attributed to it. 

\section{Theorem Proof}

We first state the precondition required for our theoretical analysis.

\textbf{Precondition. Exploration Guarantee}
\label{assump:exploration}
The forward policy $\pi_t$ maintains a minimum exploration probability $\epsilon > 0$ throughout training, i.e., for any valid transition $s \to s'$, we have $\pi_t(s'|s) \geq \epsilon / |\mathcal{A}_s|$, where $|\mathcal{A}_s|$ is the number of available actions at state $s$.

Then we establish the convergence of GFlowNet policies and subsequently the convergence of base arm reward distributions.

\begin{remark}[Scope of the theoretical claims]
The trajectory balance (TB) objective is \emph{consistent}: any exact solution of the TB equations induces the reward-proportional terminal distribution. However, for general function approximation (e.g., neural networks) and non-convex optimization, a global convergence guarantee of TB training is not established in full generality. Our analysis below therefore separates (i) TB consistency (a property of the optimum) from (ii) policy stabilization during training (an assumption that can hold when the TB loss plateaus and updates become small).
\end{remark}

\begin{lemma}[TB Consistency (Zero-loss Implies Target Distribution)]
\label{lemma:gfn_convergence}
Assume the environment induces a finite DAG with initial state $s_0$ and terminal set $\mathcal{X}$, and let $P_F(\cdot|\cdot;\theta)$ and $P_B(\cdot|\cdot;\theta)$ be valid forward/backward transition kernels with full support. If the TB constraints hold for all trajectories $\tau=(s_0\rightarrow\cdots\rightarrow x)$, i.e.,
\begin{equation}
Z_\theta \prod_t P_F(s_t|s_{t-1};\theta)=R(x)\prod_t P_B(s_{t-1}|s_t;\theta),
\label{eq:tb_identity}
\end{equation}
equivalently $\mathcal{L}_{\text{TB}}(\theta)=0$, then the induced terminal distribution satisfies $\pi_\theta(x)=\frac{R(x)}{\sum_{x'\in\mathcal{X}}R(x')}$.
\begin{equation}
\pi_\theta(x) = \frac{R(x)}{\sum_{x' \in \mathcal{X}} R(x')}.
\end{equation}
\end{lemma}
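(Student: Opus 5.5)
The idea is to marginalize the TB identity \eqref{eq:tb_identity} over all complete trajectories ending at a fixed terminal $x$, using the fact that $P_B$ is a normalized backward kernel. Write $\mathcal{T}_x$ for the (finite, since the DAG is finite) set of complete trajectories $\tau=(s_0\to s_1\to\cdots\to s_n=x)$. The forward sampler produces $\tau$ with probability $P_F(\tau)=\prod_t P_F(s_t|s_{t-1};\theta)$. The induced terminal distribution is therefore
\[
\pi_\theta(x)=\sum_{\tau\in\mathcal{T}_x}P_F(\tau).
\]
For this to be a genuine distribution over $\mathcal{X}$, every forward trajectory must terminate in $\mathcal{X}$; acyclicity together with finiteness of the DAG guarantees this.

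\textbf{Step 1.} Divide \eqref{eq:tb_identity} by $Z_\theta>0$. This gives, for each $\tau\in\mathcal{T}_x$,
\[
P_F(\tau)=\frac{R(x)}{Z_\theta}\,P_B(\tau\mid x),
\qquad
P_B(\tau\mid x):=\prod_t P_B(s_{t-1}|s_t;\theta).
\]

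\textbf{Step 2.} Show that $\sum_{\tau\in\mathcal{T}_x}P_B(\tau\mid x)=1$. This is the only step with real content. I would prove it by induction on the topological order of the DAG, starting at $s_0$. Define $\beta(s)$ as the total backward-path probability from $s$ down to $s_0$. Then $\beta(s_0)=1$. For $s\neq s_0$,
\[
\beta(s)=\sum_{s'\in\mathrm{Pa}(s)}P_B(s'|s)\,\beta(s').
\]
If $\beta\equiv 1$ on all parents, this equals $\sum_{s'}P_B(s'|s)=1$, because $P_B(\cdot|s)$ is a valid kernel supported on the parents of $s$. The "valid kernel" hypothesis is used exactly here. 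One should also note that every non-initial state has at least one parent and that backward paths cannot cycle, so the induction is well founded.

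\textbf{Step 3.} Summing Step 1 over $\mathcal{T}_x$ and applying Step 2 gives
\[
\pi_\theta(x)=\frac{R(x)}{Z_\theta}.
\]
Summing this over $x\in\mathcal{X}$ and using $\sum_x\pi_\theta(x)=1$ yields $Z_\theta=\sum_{x'}R(x')$. Substituting back gives the claimed formula.

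Full support of $P_F$ is not essential to the argument; it only ensures that both sides of the TB identity are meaningful on every trajectory.
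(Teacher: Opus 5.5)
Your proposal is correct and follows essentially the same route as the paper: sum the TB identity over all trajectories ending at $x$, use that the backward trajectory probabilities from $x$ to $s_0$ sum to one to get $Z_\theta\,\pi_\theta(x)=R(x)$, then normalize over $\mathcal{X}$. Your Step 2 induction over the topological order is the one addition; it proves the backward-normalization fact that the paper only asserts.
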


\begin{proof}
Fix a terminal state $x\in\mathcal{X}$ and sum~\eqref{eq:tb_identity} over all forward trajectories ending at $x$. The left-hand side becomes
\[
Z_\theta \sum_{\tau:s_0\rightarrow x}\prod_t P_F(s_t|s_{t-1};\theta)=Z_\theta \,\pi_\theta(x),
\]
where $\pi_\theta(x)$ is the probability of sampling terminal $x$ under the forward policy. The right-hand side becomes
\[
R(x)\sum_{\tau:s_0\rightarrow x}\prod_t P_B(s_{t-1}|s_t;\theta)=R(x),
\]
since the backward kernel defines a valid stochastic process on the reversed DAG and the total probability of reaching $s_0$ from $x$ by following $P_B$ is $1$ (i.e., the sum over all backward trajectories from $x$ to $s_0$ equals $1$). Hence $Z_\theta\,\pi_\theta(x)=R(x)$ for all $x$, and normalizing over $\mathcal{X}$ yields $\pi_\theta(x)\propto R(x)$. This is the standard consistency property of TB~\citep{bengio2021flow,malkin2022trajectory}.
\end{proof}

\begin{lemma}[Convergence of Sampling Distribution]
\label{lemma:sampling_convergence}
Assume the terminal distributions $\{\pi_t\}$ produced by training satisfy pointwise convergence $\pi_t(x)\to \pi_\infty(x)$ for all $x\in\mathcal{X}$, and that $\sum_{x\in\mathcal{X}_i}\pi_\infty(x)>0$. For any base arm $i$, define the conditional sampling distribution $p_t(x|i)=\pi_t(x)/\sum_{x'\in\mathcal{X}_i}\pi_t(x')$. Then:
\begin{equation}
\lim_{t \to \infty} \sum_{x \in \mathcal{X}_i} |p_t(x|i) - p_\infty(x|i)| = 0,
\end{equation}
where $p_\infty(x|i)=\pi_\infty(x)/\sum_{x'\in\mathcal{X}_i}\pi_\infty(x')$. Moreover, if $\pi_\infty$ satisfies the TB constraints (e.g., $\mathcal{L}_{\text{TB}}(\theta_t)\to 0$ so that $\pi_\infty=\pi^*$ as in Lemma~\ref{lemma:gfn_convergence}), then $p_\infty(\cdot|i)=p^*(\cdot|i)$.
\end{lemma}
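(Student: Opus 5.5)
The argument is elementary because $\mathcal{X}$ is finite: pointwise convergence of $\pi_t$ upgrades immediately to convergence in total variation of the conditionals. Write $Z_t^{(i)}=\sum_{x'\in\mathcal{X}_i}\pi_t(x')$ and $Z_\infty^{(i)}=\sum_{x'\in\mathcal{X}_i}\pi_\infty(x')$.

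\textbf{Step 1 (normalizers converge).} Since $\mathcal{X}_i\subseteq\mathcal{X}$ is a finite set, $Z_t^{(i)}$ is a finite sum of pointwise convergent sequences. Hence $Z_t^{(i)}\to Z_\infty^{(i)}$. By hypothesis $Z_\infty^{(i)}>0$, so there is $t_0$ with $Z_t^{(i)}\ge Z_\infty^{(i)}/2>0$ for all $t\ge t_0$. This guarantees that $p_t(\cdot|i)$ is well defined for large $t$. Handling this positivity carefully is the only real point of care in the proof, so the denominators must be bounded away from zero before anything else.

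\textbf{Step 2 (pointwise convergence of conditionals).} For each $x\in\mathcal{X}_i$, the conditional is the quotient $p_t(x|i)=\pi_t(x)/Z_t^{(i)}$. The denominator has a nonzero limit, so by continuity of division $p_t(x|i)\to\pi_\infty(x)/Z_\infty^{(i)}=p_\infty(x|i)$. If an explicit rate is wanted, the bound
\[
|p_t(x|i)-p_\infty(x|i)|\le \frac{|\pi_t(x)-\pi_\infty(x)|}{Z_t^{(i)}}+\pi_\infty(x)\,\frac{|Z_t^{(i)}-Z_\infty^{(i)}|}{Z_t^{(i)}Z_\infty^{(i)}}
\]
can be used. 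Summing it over $\mathcal{X}_i$ gives an $L^1$ bound of $\frac{4}{Z_\infty^{(i)}}\sum_{x\in\mathcal{X}_i}|\pi_t(x)-\pi_\infty(x)|$ for $t\ge t_0$.

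\textbf{Step 3 ($L^1$ limit).} The sum $\sum_{x\in\mathcal{X}_i}|p_t(x|i)-p_\infty(x|i)|$ has finitely many terms, each tending to $0$. It therefore tends to $0$, which is the displayed claim.

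\textbf{Step 4 (identification with $p^*$).} If $\pi_\infty$ satisfies the TB constraints, Lemma~\ref{lemma:gfn_convergence} gives $\pi_\infty(x)=\pi^*(x)=R(x)/\sum_{x'}R(x')$. Substituting this into the definition of $p_\infty(\cdot|i)$, the global normalizer $\sum_{x'}R(x')$ cancels. This yields $p_\infty(x|i)=R(x)/\sum_{x'\in\mathcal{X}_i}R(x')=p^*(x|i)$.

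The one subtlety here is interpretive rather than mathematical. The phrase ``$\mathcal{L}_{\text{TB}}(\theta_t)\to0$ so that $\pi_\infty=\pi^*$'' needs the limit policy to satisfy the TB equations, for instance by continuity of the TB residuals in $\pi$ on a finite DAG. I would take this as given, exactly as the statement phrases it, rather than prove it.
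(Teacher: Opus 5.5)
Your proposal is correct and follows the same route as the paper's proof. Both arguments use convergence of the finite normalizer $\sum_{x'\in\mathcal{X}_i}\pi_t(x')$ to a positive limit, then continuity of the quotient to get $p_t(x|i)\to p_\infty(x|i)$ pointwise, then finiteness of $\mathcal{X}_i$ to pass to the $L^1$ sum; your explicit bound $\frac{4}{Z_\infty^{(i)}}\sum_{x\in\mathcal{X}_i}|\pi_t(x)-\pi_\infty(x)|$, the care with denominators, and the cancellation argument identifying $p_\infty(\cdot|i)$ with $p^*(\cdot|i)$ are extra detail the paper leaves implicit.
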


\begin{proof}
Since $\mathcal{X}_i$ is finite and $\pi_t(x)\to \pi_\infty(x)$ for each $x\in\mathcal{X}$, we have:
\begin{align}
p_t(x|i) &= \frac{\pi_t(x)}{\sum_{x' \in \mathcal{X}_i} \pi_t(x')} \xrightarrow{t \to \infty} \frac{\pi_\infty(x)}{\sum_{x' \in \mathcal{X}_i} \pi_\infty(x')} = p_\infty(x|i).
\end{align}
The convergence in total variation follows from the finiteness of $\mathcal{X}_i$ and pointwise convergence.
\end{proof}

\textbf{Theorem 1.}
Assume (i) $\pi_t$ converges pointwise to a limiting terminal distribution $\pi_\infty$ (policy stabilization), and (ii) candidates in $C_{i,t}$ are sampled i.i.d.\ from $p_t(\cdot|i)$ at each round. Then the reward distribution of each base arm $i$ converges to a stable distribution. Specifically, let
\begin{equation}
X_{i,t} = \frac{1}{|C_{i,t}|}\sum_{x \in C_{i,t}} r(x),
\end{equation}
where $C_{i,t}$ is the set of $n$ candidates sampled at round $t$ that contain base arm $i$, and $r(x) = R(x)/R_{\max}$ is the normalized reward. Then $X_{i,t}$ converges in distribution to a random variable $X_i^*$ as $t \to \infty$.

\textbf{Proof.}
\label{Proof 1}
We establish the convergence through the following steps.

\noindent\textbf{Step 1: Characterization of the Limiting Distribution.}\\
At round $t$, candidates in $C_{i,t}$ are sampled i.i.d.\ from the distribution $p_t(\cdot|i)$ over $\mathcal{X}_i$. The empirical mean reward is:
\begin{equation}
X_{i,t} = \frac{1}{|C_{i,t}|}\sum_{x \in C_{i,t}} r(x).
\end{equation}
Define the limiting expected reward under the stabilized (limiting) conditional distribution:
\begin{equation}
\mu_{i,\infty} = \sum_{x \in \mathcal{X}_i} p_\infty(x|i) \cdot r(x).
\end{equation}
If additionally the TB constraints are solved asymptotically (e.g., $\mathcal{L}_{\text{TB}}(\theta_t)\to 0$ so that $\pi_\infty=\pi^*$), then $p_\infty(\cdot|i)=p^*(\cdot|i)$ and $\mu_{i,\infty}=\mu_i^*$.

\noindent\textbf{Step 2: Convergence of Conditional Expectations.}\\
Let $\mu_{i,t} = \mathbb{E}[r(X) | X \sim p_t(\cdot|i)] = \sum_{x \in \mathcal{X}_i} p_t(x|i) \cdot r(x)$ be the expected reward under the current sampling distribution. By Lemma~\ref{lemma:sampling_convergence}, $p_t(x|i) \to p_\infty(x|i)$ for all $x \in \mathcal{X}_i$. Since $r(x) \in [0,1]$ is bounded, the dominated convergence theorem yields:
\begin{equation}
\mu_{i,t} = \sum_{x \in \mathcal{X}_i} p_t(x|i) \cdot r(x) \xrightarrow{t \to \infty} \sum_{x \in \mathcal{X}_i} p_\infty(x|i) \cdot r(x) = \mu_{i,\infty}.
\end{equation}

\noindent\textbf{Step 3: Convergence in Distribution.}\\
For fixed $t$, given $n_t = |C_{i,t}|$ samples, the empirical mean $X_{i,t}$ has:
\begin{equation}
\mathbb{E}[X_{i,t}] = \mu_{i,t}, \quad \text{Var}(X_{i,t}) = \frac{\sigma_{i,t}^2}{n_t},
\end{equation}
where $\sigma_{i,t}^2 = \sum_{x \in \mathcal{X}_i} p_t(x|i)(r(x) - \mu_{i,t})^2$ is the variance under $p_t(\cdot|i)$.

We now show convergence in distribution. For any $\delta > 0$, by Chebyshev's inequality:
\begin{equation}
\mathbb{P}(|X_{i,t} - \mu_{i,t}| > \delta) \leq \frac{\sigma_{i,t}^2}{n_t \delta^2} \leq \frac{1}{4n_t \delta^2},
\end{equation}
where we used $\sigma_{i,t}^2 \leq 1/4$ since $r(x) \in [0,1]$.

Let $\varepsilon > 0$ be arbitrary. Choose $T$ large enough such that for all $t > T$: (i) $|\mu_{i,t} - \mu_{i,\infty}| < \varepsilon/2$ (by Step 2), and (ii) $n_t$ is sufficiently large that $1/(4n_t\delta^2) < \varepsilon/2$ for $\delta = \varepsilon/2$. Then for $t > T$:
\begin{align}
\mathbb{P}(|X_{i,t} - \mu_{i,\infty}| > \varepsilon) &\leq \mathbb{P}(|X_{i,t} - \mu_{i,t}| > \varepsilon/2) + \mathbb{P}(|\mu_{i,t} - \mu_{i,\infty}| > \varepsilon/2) \\
&< \varepsilon/2 + 0 = \varepsilon/2.
\end{align}

This shows $X_{i,t} \xrightarrow{p} \mu_{i,\infty}$. More generally, the distribution of $X_{i,t}$ converges to a distribution concentrated around $\mu_{i,\infty}$. In the large-sample regime where $n_t \to \infty$, by the Central Limit Theorem, $(X_{i,t} - \mu_{i,t})\sqrt{n_t} \xrightarrow{d} \mathcal{N}(0, \sigma_{i,\infty}^{2})$, where $\sigma_{i,\infty}^{2} = \sum_{x \in \mathcal{X}_i} p_\infty(x|i)(r(x) - \mu_{i,\infty})^2$.

Combining the convergence of $\mu_{i,t} \to \mu_{i,\infty}$ and $\sigma_{i,t}^2 \to \sigma_{i,\infty}^{2}$ with Slutsky's theorem, we conclude that $X_{i,t}$ converges in distribution to $X_i^*$ concentrated around $\mu_{i,\infty}$ (e.g., approximately $\mathcal{N}(\mu_{i,\infty}, \sigma_{i,\infty}^{2}/n^*)$ for large fixed batch size $n^*$), or concentrates at $\mu_{i,\infty}$ as $n_t \to \infty$.

\noindent\textbf{Step 4: Uniform Integrability (Optional Strengthening).}\\
Since $r(x) \in [0,1]$, we have $X_{i,t} \in [0,1]$ uniformly, implying $\{X_{i,t}\}$ is uniformly integrable. Combined with convergence in probability, this yields $\mathbb{E}[X_{i,t}] \to \mu_{i,\infty}$, establishing convergence in $L^1$ as well.

This completes the proof of Theorem 1.

\textbf{Theorem 2 (Asymptotic Approximate Distributional Invariance).}
\label{Proof 2}
Under the two-phase protocol (restricted training on $\mathbb{S}$, unrestricted evaluation on $ALL$), assume there exists a sequence $\delta_t\downarrow 0$ such that for any base arm $a$ and any two super arms $\mathbb{S},\mathbb{S}'$ that could be used in Phase~1 at round $t$, the resulting evaluation-time conditional sampling distributions satisfy
\begin{equation}
\big\|p_t^{\mathbb{S}}(\cdot|a)-p_t^{\mathbb{S}'}(\cdot|a)\big\|_1 \le \delta_t.
\end{equation}
Then the distribution of the unrestricted evaluation reward estimate $X_a^t$ becomes asymptotically invariant to the Phase~1 super arm, i.e., the CMAB ``distributional consistency'' assumption holds approximately in late training.

\textbf{Proof.}
In Phase~2, $X_a^t$ is an empirical average of bounded rewards (after normalization, $r(\cdot)\in[0,1]$) over samples drawn from $p_t^{\mathbb{S}}(\cdot|a)$. For any bounded function $f$ with $\|f\|_\infty\le 1$, the difference in expectations is bounded by total variation:
\[
\Big|\mathbb{E}_{p_t^{\mathbb{S}}}[f]-\mathbb{E}_{p_t^{\mathbb{S}'}}[f]\Big|\le \tfrac{1}{2}\big\|p_t^{\mathbb{S}}(\cdot|a)-p_t^{\mathbb{S}'}(\cdot|a)\big\|_1 \le \tfrac{1}{2}\delta_t.
\]
Taking $f(x)=r(x)$ yields $|\mathbb{E}[X_a^t|\mathbb{S}]-\mathbb{E}[X_a^t|\mathbb{S}']|\le \tfrac{1}{2}\delta_t$. Since $\delta_t\to 0$, the dependence of the evaluation-time arm statistics on the preceding training subspace vanishes asymptotically.

\section{Regret Analysis}
% \subsection{Key Assumptions of CMAB in Flow Networks}
% There are two assumptions required by CMAB methods for regret analysis. 

% \textbf{Monotonicity.} For a super arm $\mathbb{S}$, the expected reward $r_\mu(\mathbb{S})$ is non-decreasing in the reward vector $\mu$. This assumption is natural in flow networks because if all base arms (actions) within a super arm \( \mathbb{S} \) exhibit higher expected rewards (i.e., \( \mu'_i \geq \mu_i \) for all \( i \in \mathbb{S} \)), it implies that the flow network under \( \mu' \) is better optimized than under \( \mu \).

% \textbf{Bounded Smoothness.} There exists an increasing function $f$ such that
% \begin{equation}
%     |r_{\mu}(\mathbb{S}) - r_{\mu'}(\mathbb{S})| < f\left(\max_{i \in \mathbb{S}} |\mu_i - \mu'_i|\right).
% \end{equation}
% This ensures stability: small perturbations in individual arm rewards do not cause disproportionate fluctuations at the super-arm level.  

% \subsection{Absence of Optimal Super Arms}
% The CMAB framework typically relies on an oracle capable of providing an \( (\alpha, \beta) \)-approximation of the optimal super arm (i.e., a subset of base arms that maximizes the expected reward) to make a tight analysis of the regret bound. However, in flow networks, identifying the exact optimal super arm is computationally prohibitive due to the combinatorial explosion of possible states. Even an \( (\alpha, \beta) \)-approximation of the optimal super arm is impractical because the $\alpha,\beta$ might change when the flow network is different.
\subsection{Regret Bound}
The flow network evolves dynamically throughout training, resulting in time-varying reward distributions for each base arm $i$. While these distributions can stabilize in late training under policy-stabilization assumptions, the inherent non-stationarity introduces considerable uncertainty into the system. Such temporal variability presents significant challenges for deriving a precise regret bound for the learning algorithm.
% Following the construction of \cite{chen2013combinatorial}, we define a super arm $S$ as $bad$ if $r_{\mu}^t(S)< opt_{\mu}^t$ at round t. We define $S_B^t = \{S\ |\ r_{\mu}^t(S)< opt_{\mu}^t\}$
% as the set of bad super arms at round t.

% While the true regret bound cannot be quantified without knowledge of the optimal arm, we can construct empirical regret metrics to evaluate algorithm performance. 
However, we can still construct empirical regret metrics to evaluate algorithm performance. A practical approach is to use the best empirical arm observed up to time $t$, denoted as $ \hat{\mu}_t(\mathbb{S}^*)$, as a proxy for the unknown optimal mean reward. The estimated cumulative regret is then computed as:
\begin{equation}
    \hat{R}(T)=\sum_{t=1}^T ( \hat{\mu}_t(\mathbb{S}^*)-\hat{\mu_t}(\mathbb{S})).
\end{equation}
We employ the metric of empirical cumulative regret to quantify the performance gap between consistently selecting the currently known optimal super arm and the super arm chosen by our algorithm. To comprehensively evaluate our method's efficacy in minimizing regret, we introduce a baseline random strategy called {\randmodel} that uniformly selects $K$ base arms at each round.

Figure \ref{fig:regret} presents the cumulative regret curves for {\modelname} and the {\randmodel}, revealing distinct performance differences across tasks:  

1. Bit Sequence Generation Task (left panel): The random policy exhibits competitive performance, resulting in a moderate regret gap between the two models. This suggests that simple heuristics suffice in this simpler task setting.  

2. Molecule Design Task (middle panel): {\modelname} demonstrates substantial improvement, achieving a significantly lower cumulative regret than {\randmodel}. This highlights the proposed model's effectiveness in optimizing structured, complex objectives.  

3. L14-RNA1 Task (right panel): The regret gap widens again, where {\modelname} substantially outperforms {\randmodel}, indicating its ability to handle intricate combinatorial challenges in nucleic acid sequence optimization.  

\begin{figure}[ht] {
   \centering
    \includegraphics[width=\textwidth]{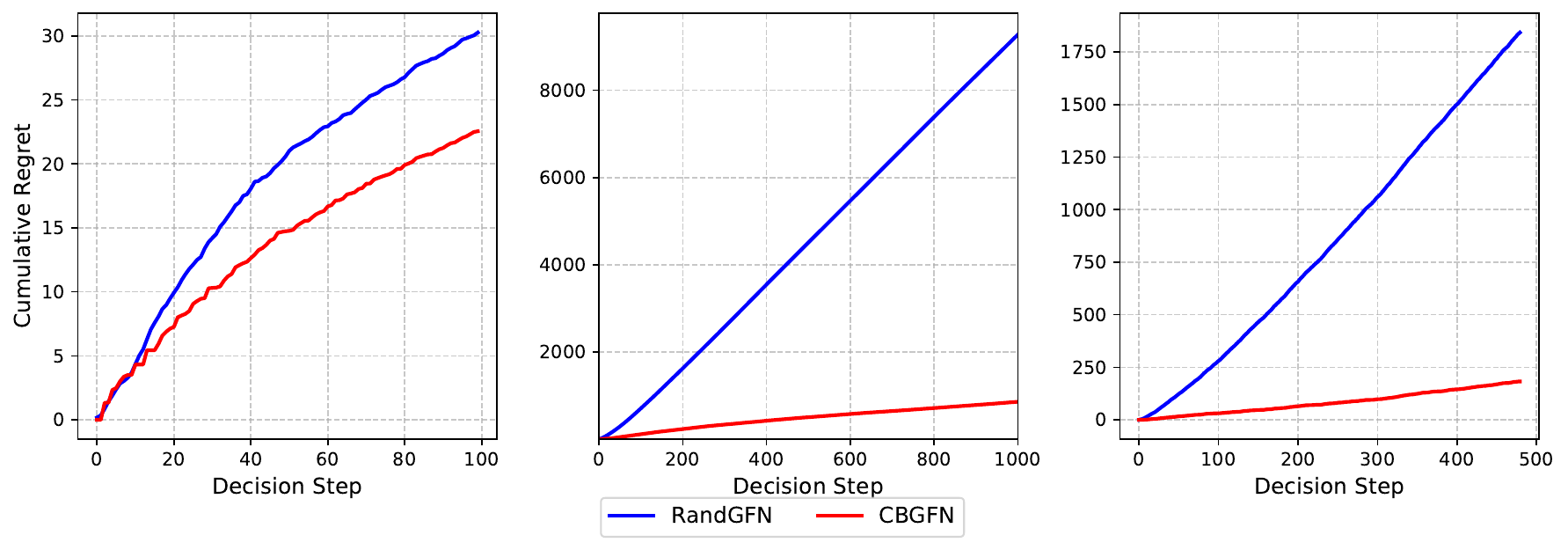}}
   \caption{\textbf{Experimental result on cumulative regret with different tasks.} Left: Cumulative regret of models in Bit Sequence Generation Task. Center: Cumulative regret of models in Molecule Design Task. Right: Cumulative regret of models in L14-RNA1 Task.}
   \label{fig:regret}
\end{figure}
%%%%%%%%%%%%%%%%%%%%%%%%%%%%%%%%%%%%%%%%%%%%%%%%%%%%%%%%%%%%%%%%%%%%%%%%%%%%%%%
%%%%%%%%%%%%%%%%%%%%%%%%%%%%%%%%%%%%%%%%%%%%%%%%%%%%%%%%%%%%%%%%%%%%%%%%%%%%%%%
\section{Experiment details: Bit Sequence}
\begin{table}[t] 
    \centering
    \caption{Key hyperparameter setting in Bit Sequence Generation task}
    \begin{tabular}{lll}
    \toprule
         \textbf{Parameter} & \textbf{Value}  \\ \midrule
         Batch size & 16  \\
         Number of steps & 10000  \\
         k-bits & 4 \\
         Lamda & 1.9 \\
         Learning rate & 1e-3\\
         Z Learning rate & 1e-3\\
         $\beta$ & 2\\
         Explore Epsilon & 0.01 \\
         K & 4 \\
         Decision Interval & 100 \\
         \bottomrule
    \end{tabular}
        \label{bit-hyper}
    \label{tab:hyper-bit}
\end{table}

Here we present the hyperparameter configuration for our bit sequence generation experiments (Table \ref{tab:hyper-bit}). While adopting the baseline framework from \citet{malkin2022trajectory}, we reduce the training steps from 50,000 to 10,000. Each action is represented by $k=4$ bits, and through empirical validation, we selected K=4 candidate arms from $\{2,4,6,8,10\}$. For the CMAB algorithm, we determined 100 steps to be the optimal decision interval after evaluating candidates from $\{50,100,200,300,400,500\}$. 

\subsection{Optimal Super Arm Configuration}
The set $M$ is constructed through random combinations of substrings derived from the base patterns $\{00000000, 11111111, 11110000, 00001111, 00111100\}$. For the case where $K=4$, we can analytically determine the optimal super arm configuration as $\mathbb{S}=\{0000,1111,1100,0011\}$. This configuration enables perfect mode identification within its substate space, achieving an average distance of $0.0$ to all target modes. We evaluate the performance when consistently selecting this optimal super arm configuration in Figure~\ref{fig:bit-op}.

% \label{optimal}
% \begin{wrapfigure}{r}{0.5\textwidth}
%    \includegraphics[width=0.5\textwidth]{figures/episilon-modes.pdf}
%    \caption{\textbf{{Experimental results on Bit Sequence Generation with different $\epsilon$ values.}} The plot shows how exploration rate affects mode discovery and sample quality in our framework.}
%    \label{fig:epsilon}
%    \vspace{-35pt}
% \end{wrapfigure}
% \subsection{Different exploration parameter $\epsilon$}

\begin{figure}[t]
    \centering
    \begin{subfigure}{0.48\linewidth}
        \centering
        \includegraphics[width=\linewidth]{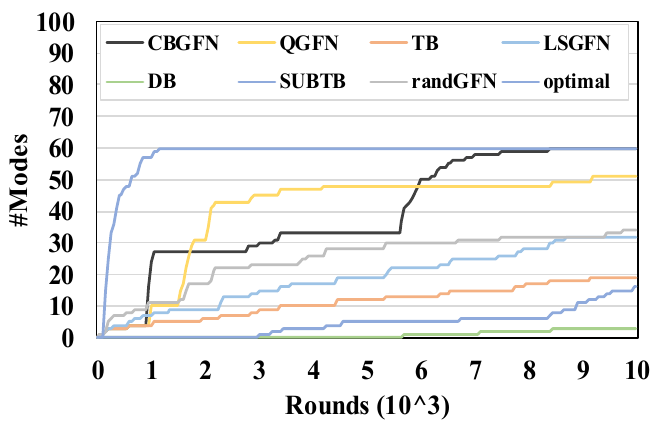}
        \caption{Average reward of each individual arm}
        \label{fig:bit-op}
    \end{subfigure}
    \hfill
    \begin{subfigure}{0.48\linewidth}
        \centering
        \includegraphics[width=\linewidth]{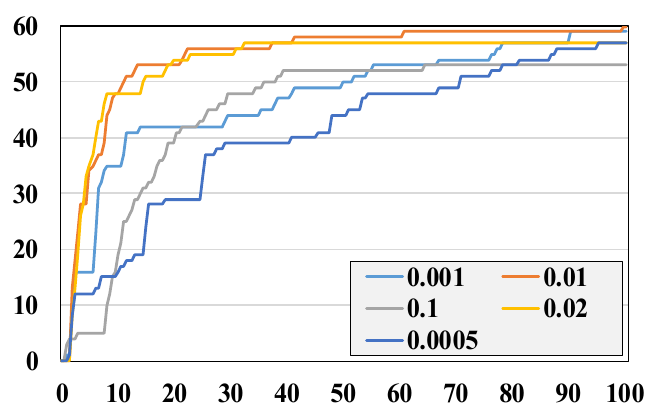}
        \caption{Number of modes discovered}
        \label{fig:bit-epsilon}
    \end{subfigure}
    \caption{Supplementary Experimental Results for the Bit Sequence Task.}
    
\end{figure}

\subsection{Experiments on different $\epsilon$}
We conduct a systematic investigation of the exploration parameter $\epsilon$, which controls the probability of random actions in the {\GFNs} framework. This parameter critically influences the sampling behavior of the flow network within substate spaces. As shown in Figure~\ref{fig:bit-epsilon}, through comprehensive testing across $\epsilon$ values $\{0.1, 0.01, 0.02, 0.001, 0.0005\}$, we find that $\epsilon = 0.01$ demonstrates superior performance in terms of mode discovery and sample quality.

The results indicate that moderate exploration ($\epsilon = 0.01$) achieves the best balance between exploration and exploitation, while higher values lead to excessive randomness and lower values result in insufficient exploration of the state space.

\section{Experiment details: Molecule Design}
\label{F}
We present the hyperparameter configurations for our Molecule Design Task experiments, as detailed in Table \ref{tab:hyper-mols}. Building upon the framework established by \citet{bengio2021flow}, we maintain their default parameter settings while introducing specific optimizations. After empirical evaluation, we set the number of base arms $K$ to 30, selected from $\{8,10,20,30,40,50,60,70,80,90,100\}$, and determined the optimal decision interval for the CMAB algorithm to be $400$ from $\{50,100,200,300,400,500\}$.
\begin{table*}[t] 
\caption{Key hyperparameter setting in Molecule Design}
    \centering
    \label{tab:hyper-mols}
    \begin{tabular}{lll}
    \toprule
         \textbf{Parameter} & \textbf{Value}  \\ \midrule
         Batch size & 4  \\
         Number of steps & 100000  \\
         Lamda & 0.99 \\
         Learning rate & 5e-4\\
         $Z$ Learning rate & 5e-3\\
         Tanimoto Similarity Threshold & 0.7\\
         $\beta$ & 8\\
         Explore Epsilon & 0.05 \\
         $K$ & 30 \\
         Decision Interval & 400 \\
         \bottomrule
    \end{tabular}
    \label{mols-hyper}
\end{table*}

% \subsection{Experiments on different $K$}
% \label{diff-k}
% The hyperparameter $K$, which determines the size of the super arm, serves as our primary mechanism for controlling the method's greediness. In our molecule generation experiments, we evaluate 10 distinct $K$ values ranging from 8 to 100. Smaller $K$ values correspond to greedier selections, as we restrict our choice to only the top $K$ base arms, resulting in more compact sub-state spaces. However, this increased greediness comes at the cost of reduced candidate diversity, as demonstrated in Figure \ref{fig:diff-k}. While $K \in \{8,10,20\}$ yields higher average scores among the top 1000 candidates, it significantly compromises molecular diversity (measured by the number of distinct modes). Through a comprehensive evaluation, we identify $K=30$ as the optimal setting for our molecular design task, achieving an effective balance between candidate quality and diversity.

% \begin{figure} [ht]
%    \centering
%     \includegraphics[width=\textwidth]{figures/diff-k.pdf}
%    \caption{\textbf{Experimental result on Molecule Design with different $k$ with 100,000 rounds.} Left: The number of modes R>7.5 with a Tanimoto similarity threshold of 0.7. Center: The number of modes R>8 with a Tanimoto similarity threshold of 0.7. Right: The average reward of the top 1000 high-scoring samples.}
%    \label{fig:diff-k}
% \end{figure}

\section{Experiment details: RNA-Binding}
\label{G}
In this section, we give the hyperparameters used for each of our experiments' RNA-Binding Task as shown in Table~\ref{tab:hyper-rna}.
In our experimental setup, the learning rate of $1 \times 10^{-4}$ is selected from  $ \{1 \times 10^{-5},1 \times 10^{-4},1 \times 10^{-3},5 \times 10^{-3} \} $ and the Z learning rate of $1 \times 10^{-2}$ is selected from $ \{1 \times 10^{-5},1 \times 10^{-4},1 \times 10^{-3},5 \times 10^{-3} \} $. The Lambda for SUBTB uses 0.9 out of $\{0.8,0.9,0.99,0.999\}$. The explore epsilon is used to control the random action probability, five values are tested, including $\{0.1,0.01,0.001,0.0001,0.0005\}$. We set the reward exponent $\beta$ to 8 from $\{3,4,5,6,7,8,9,10\}$. $K$ is the number of base arms to compose the super arm selected from $\{1,1/2,2,2/3,3,3/4,4\}$, where $x/x+1$ denotes we can choose $x$ or $x+1$ base arms as a super arm. Please note that the RNA1 environment we use is a little different from Teacher\citep{kim2024adaptive}. Our environment has 1590 modes in total, but \citet{kim2024adaptive} has 8967 modes. We replace the environment file of Teacher with our environment file to make a fair comparison. The environment is constructed following \citet{kim2023local}.
\begin{table*}[t] 
    \centering
    \caption{Key hyperparameter setting in RNA-Binding task}
    \label{tab:hyper-rna}
    \begin{tabular}{lll}
    \toprule
         \textbf{Parameter} & \textbf{Value}  \\ \midrule
         Batch size & 32  \\
         Number of steps & 5000  \\
         RNA length & 14 \\
         MDP style & Prepend and Append \\
         Lamda & 0.9 \\
         Learning rate & 1e-4\\
         Z Learning rate & 1e-2\\
         Mode metric & Hamming Ball 1 \\
         $\beta$ & 20\\
         Explore Epsilon & 0.01 \\
         K & 2/3 \\
         Decision Interval & 50 \\
         \bottomrule
    \end{tabular}
\label{rna-hyper}
\end{table*}

\subsection{Different settings: exponent $\beta$}
\label{sec:beta_rna}
\begin{figure} [ht]
   \centering
    \includegraphics[width=\textwidth]{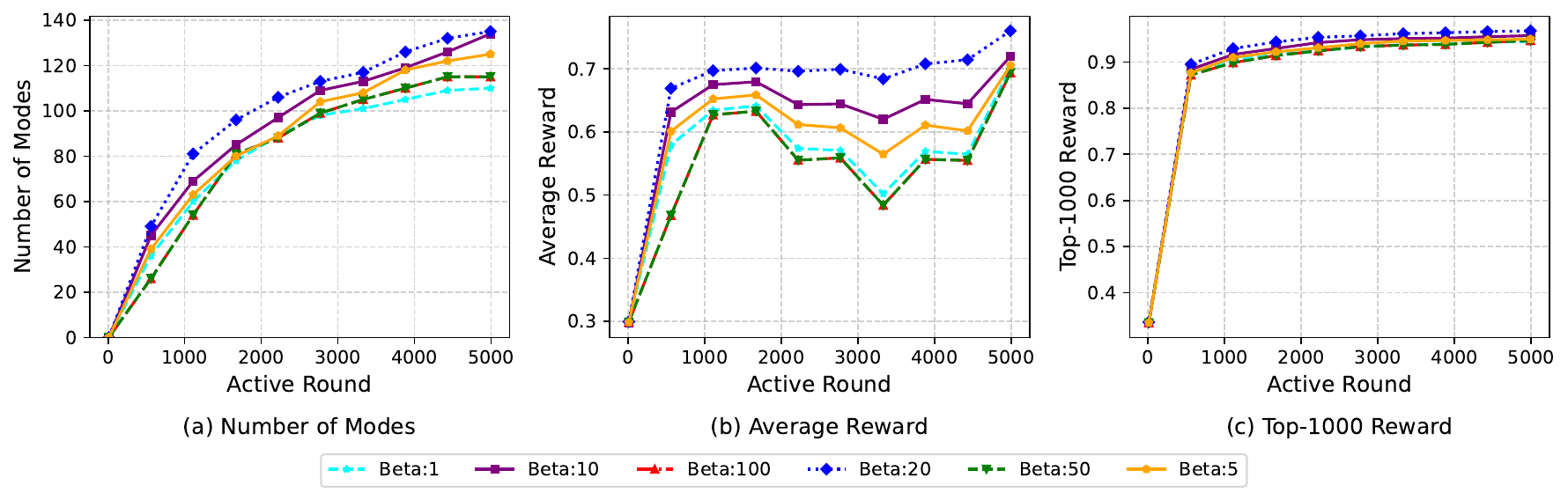}
   \caption{\textbf{Experimental result on RNA-Binding Task 1 with different $\beta$.}}
   \label{beta1}
\end{figure}

Since introduced by \citet{bengio2021flow}, there is already a useful technique to increase the greediness of {\GFNs}, that is the exponent $\beta$. The adjusted reward function is formulated as $\hat{R(x)} = R(x)^\beta$. The higher $\beta$ makes the model greedier but at the cost of greater numerical instability. Besides, since the middle-reward regions are adjusted into low-reward regions, the diversity is also reduced, leading to mode collapse \citep{lau2024qgfn}. The choice of exponent $\beta$ critically influences the behavior of the CUCB algorithm, as it directly modulates the reward scaling of individual arms. We experimented on different settings of exponent $\beta$ as shown in Figure\ref{beta1}, Figure\ref{beta2} and Figure\ref{beta3}.

In the L14-RNA1 task, models with varying $\beta$ values consistently identified over 100 distinct modes, with the $\beta = 20$ configuration demonstrating superior performance compared to other settings. Notably, the model with $\beta = 1$ exhibited significantly poorer performance relative to other $\beta$ values. Regarding average reward metrics, the $\beta = 20$ model achieved substantially better results, establishing a clear performance gap over other configurations. However, all models showed comparable performance when evaluating the top 1000 rewards.
\begin{figure} [ht]
   \centering
    \includegraphics[width=\textwidth]{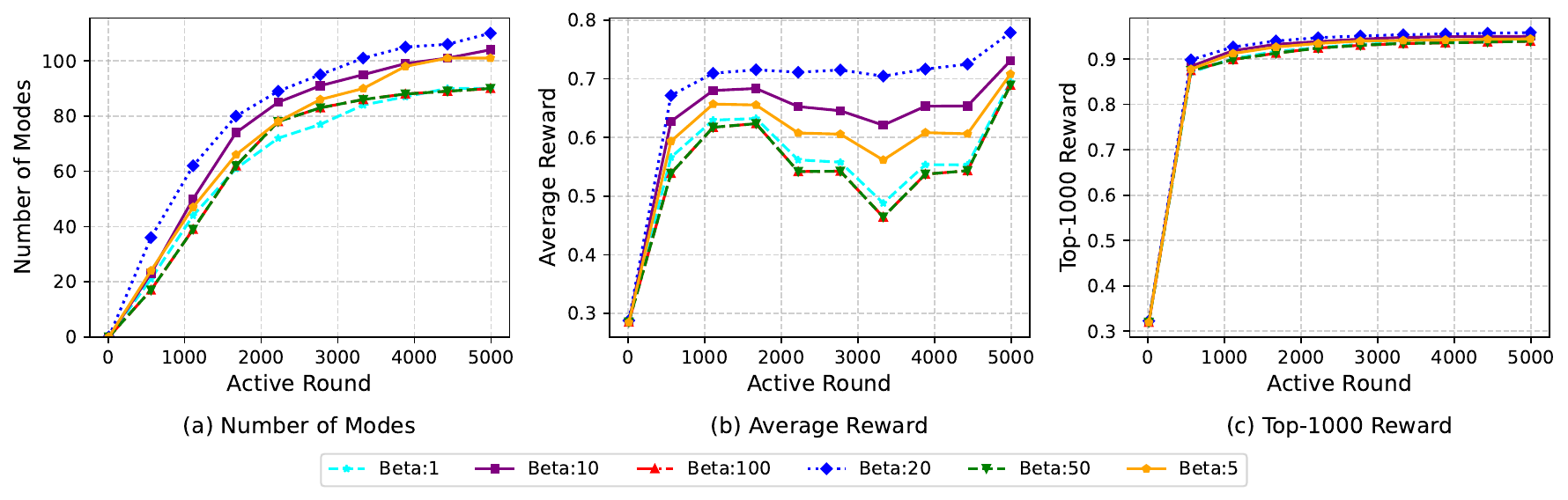}
   \caption{\textbf{Experimental result on RNA-Binding Task 2 with different $\beta$.}}
   \label{beta2}
\end{figure}

In the L14-RNA2 task, while maintaining performance trends consistent with L14-RNA1, the task proved more challenging for mode discovery. All models identified fewer modes compared to L14-RNA1, yet the $\beta = 20$ configuration consistently demonstrated superior performance across all metrics, maintaining its lead over other parameter settings.
\begin{figure} [ht]
   \centering
    \includegraphics[width=\textwidth]{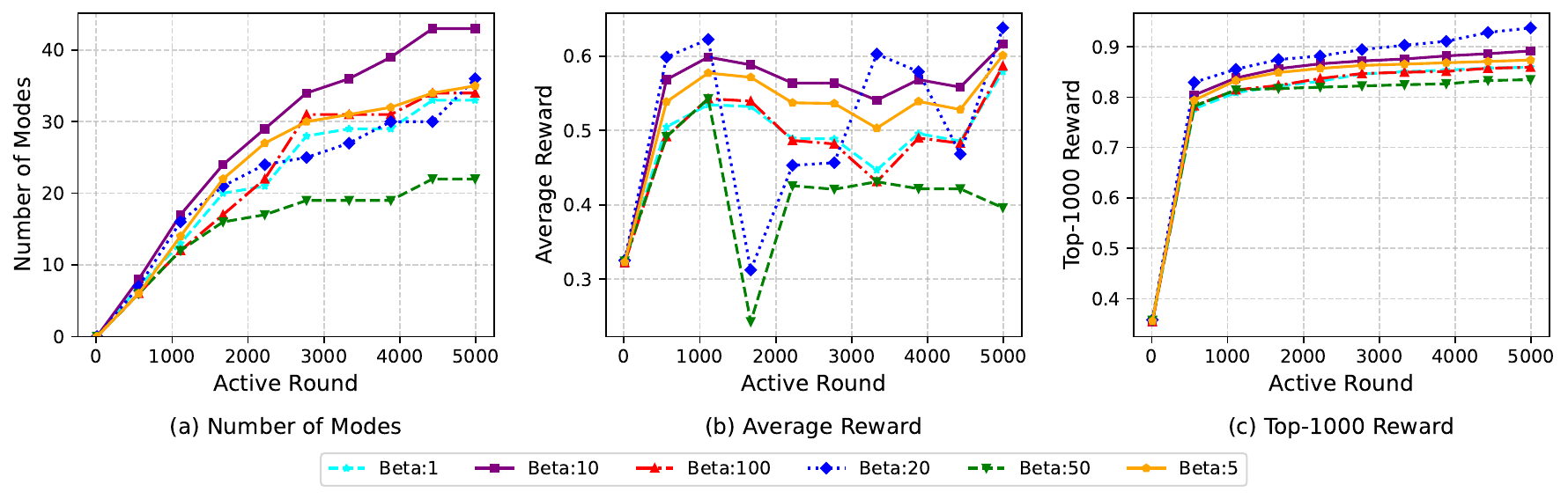}
   \caption{\textbf{Experimental result on RNA-Binding Task 3 with different $\beta$.}}
   \label{beta3}
\end{figure}

L14-RNA3 proves to be the most challenging task, exhibiting a significant decline in both discovered modes and average reward compared to other tasks. In this setting, $\beta = 10$ achieves the best overall performance, whereas $\beta = 20$, while attaining a higher average reward, suffers from instability. The task’s difficulty suggests that using an excessively large $\beta$ may be suboptimal, as it risks mode collapse by overly prioritizing high-reward candidates while neglecting mid-reward solutions. 

\section{Alternative strategies of super arms}
\label{strategies}
Aside from top-K actions, we came up with three alternatives of arm selection: 1) selecting base arms proportional to their scores; 2) selecting base arms randomly; 3) keeping hard pruning but removing CMAB. Here, hard pruning refers to the permanent removal of some fixed actions from the selection pool. Random selection assigns equal selection probabilities to all base arms when constructing the super arm.
\begin{table}[h]
\centering
\caption{Comparison of Different Methods in the molecule generation task.}
\begin{tabular}{lcccc}
\toprule
Method & Modes $R > 7.5$ & Modes $R > 8$ & Top-1000 Reward & Top-1000 Similarity \\
\midrule
Hard-Pruning   & 1069  & 132  & 7.95 & 0.47 \\
Proportional   & 7554  & 1190 & 8.19 & 0.46 \\
Random         & 2179  & 284  & 8.03 & 0.47 \\
Top-K      & 12089 & 2952 & 8.31 & 0.49 \\
\bottomrule
\end{tabular}

\end{table}

When employing a simple hard-pruning approach without the CMAB framework, GFlowNet initially discovers numerous high-reward modes quickly. However, the mode distribution within this constrained subspace is sparse, and after these easily accessible modes are found, the discovery rate drops significantly as the remaining modes become increasingly difficult to identify.

In contrast, a random selection strategy explores the space uniformly by choosing super arms indiscriminately, yielding an expected reward equal to the average across all sub-spaces. This approach achieves performance comparable to the Trajectory Balance (TB) method.

A proportional selection strategy, which chooses arms according to their estimated rewards, naturally outperforms random selection by favoring higher-reward regions. However, it remains less aggressive than the top-K approach. The proportional method discovers fewer total modes yet achieves marginally better performance in top-1000 similarity metrics. The results suggest that while proportional selection maintains better diversity, a more aggressive top-$K$ strategy enables superior overall mode coverage.

% \section{Alternative CMAB algorithms}
% \label{other-cmab}
% Here, we conduct an additional experiment where we replaced CUCB with other CMAB algorithms, the results are listed below:
% \begin{table}[h!]
% \centering
% \caption{Comparison of CMAB methods on reward metrics}
% \begin{tabular}{lccc}
% \toprule
% Method & Modes $R>7.5$ & Modes $R>8$ & Top-K Reward \\
% \midrule
% CUCB & 13074 & 3207 & 8.436 \\
% CTS  & 12220 & 2725 & 8.431 \\
% ESCB & 13568 & 3315 & 8.442 \\
% \bottomrule
% \end{tabular}
% \end{table}

% These results confirm that our framework is not dependent on CUCB. Replacing CUCB with CTS or ESCB yields very similar performance, with ESCB even slightly outperforming CUCB in all metrics. This demonstrates that the improvement comes from the pruning framework itself, not from a particular bandit choice. The framework remains effective across different CMAB algorithms. Regarding non-bandit pruning strategies such as curiosity-driven reinforcement learning or evolutionary search, we note that while related, they involve substantially different design challenges, including well-known issues such as curiosity traps and detachment. These directions fall outside the focus of this paper, which centers specifically on bandit-based pruning within our proposed framework.

\section{Illustration of workflow}
Here, we present a case study to illustrate how the proposed method work in the bit sequence generation task. The set $M$ is constructed by randomly combining substrings derived from the base patterns ${00000000, 11111111, 11110000, 00001111, 00111100}$. For $K = 4$, theoretical analysis reveals that the optimal super arm configuration is $S = {0000, 1111, 1100, 0011}$, which achieves perfect mode identification within its substate space with an average distance of $0.0$ to all target modes.

Initially, the base arm $0000$ gradually gains higher values (as shown in Figure \ref{fig:bit-seq-all}), leading to a suboptimal super arm configuration $S = {0000, 1111, 1100, 0001}$. The UCB mechanism in line 11 of Algorithm \ref{alg:CUCB} then identifies $0011$ as a promising alternative - while its current estimated value is slightly lower than $0001$, its higher uncertainty (due to insufficient exploration) suggests significant potential. This triggers an exploration phase where the algorithm selects $S = {0000, 1111, 1100, 0011}$ as the new super arm. Subsequent evaluations confirm that $0011$ consistently generates higher-quality candidates, and the value update in line 15 of Algorithm ~\ref{alg:CUCB} reinforces its estimate in future rounds.

Notably, even though we know a priori that $S$ is optimal, the CMAB framework continues to explore alternative subspaces with some probability. This characteristic ensures the algorithm maintains the capability to discover potentially better configurations while predominantly exploiting the known optimal solution, effectively balancing the exploration-exploitation trade-off throughout the learning process.

\section{Time and GPU Memory Consumption}
\label{time-memory} 
Table~\ref{tab:timecost} reports a more fine-grained profiling breakdown of runtime and memory consumption in the molecule generation task, comparing vanilla GFlowNet and CBFlowNet under different arm size $K$ ($10^2$, $10^4$), where the parenthetical indicates the scale of bandit bookkeeping in our implementation (larger values imply heavier update workload). Overall, \textbf{training time dominates and is nearly identical across methods} ($\sim$311\,s / 100 rounds), indicating that CBFlowNet introduces minimal overhead in network optimization. The extra cost mainly comes from \textbf{sampling time}, while the \textbf{update time} remains negligible but increases slightly as the setting grows. Memory footprints are comparable: GPU memory increases mildly (about +13--52\,MB) and CPU memory by at most $\sim$850\,MB in our profiling.

\begin{table*}[t]
    \centering
    \scriptsize
    \caption{Profiling breakdown of runtime and memory consumption on Molecule Design (per 100 rounds).}
    \begin{tabular}{lrrrrrr}
    \toprule
    Method & Sampling Time & Training Time & Update Time & Other Time & GPU Memory & CPU Memory \\
    \midrule
    CBFlowNet ($10^2$) & 61.07\,s & 308.90\,s & 0.03\,s & 151.9\,s & 4125.0\,MB & 2226.9\,MB \\
    CBFlowNet ($10^4$) & 61.69\,s & 311.23\,s & 0.15\,s & 153.1\,s & 4148.3\,MB & 3051.2\,MB \\
    % CBFlowNet ($10^6$) & 62.61\,s & 313.61\,s & 0.66\,s & 155.7\,s & 4164.5\,MB & 2301.7\,MB \\
    GFlowNet & 50.70\,s & 310.64\,s & 0.00\,s & 153.7\,s & 4112.4\,MB & 2217.3\,MB \\
    \bottomrule
    \end{tabular}
    \label{tab:timecost}
\end{table*}

% \begin{wrapfigure}{r}{0.5\textwidth}
%     \label{fig:prune}
%     \includegraphics[width=0.5\textwidth]{iclr2026/figures/flower.pdf}
%     \caption{Experiment results on ELBO.}
% \end{wrapfigure}
\begin{figure}[ht]
    \centering
    \includegraphics[width=0.5\linewidth]{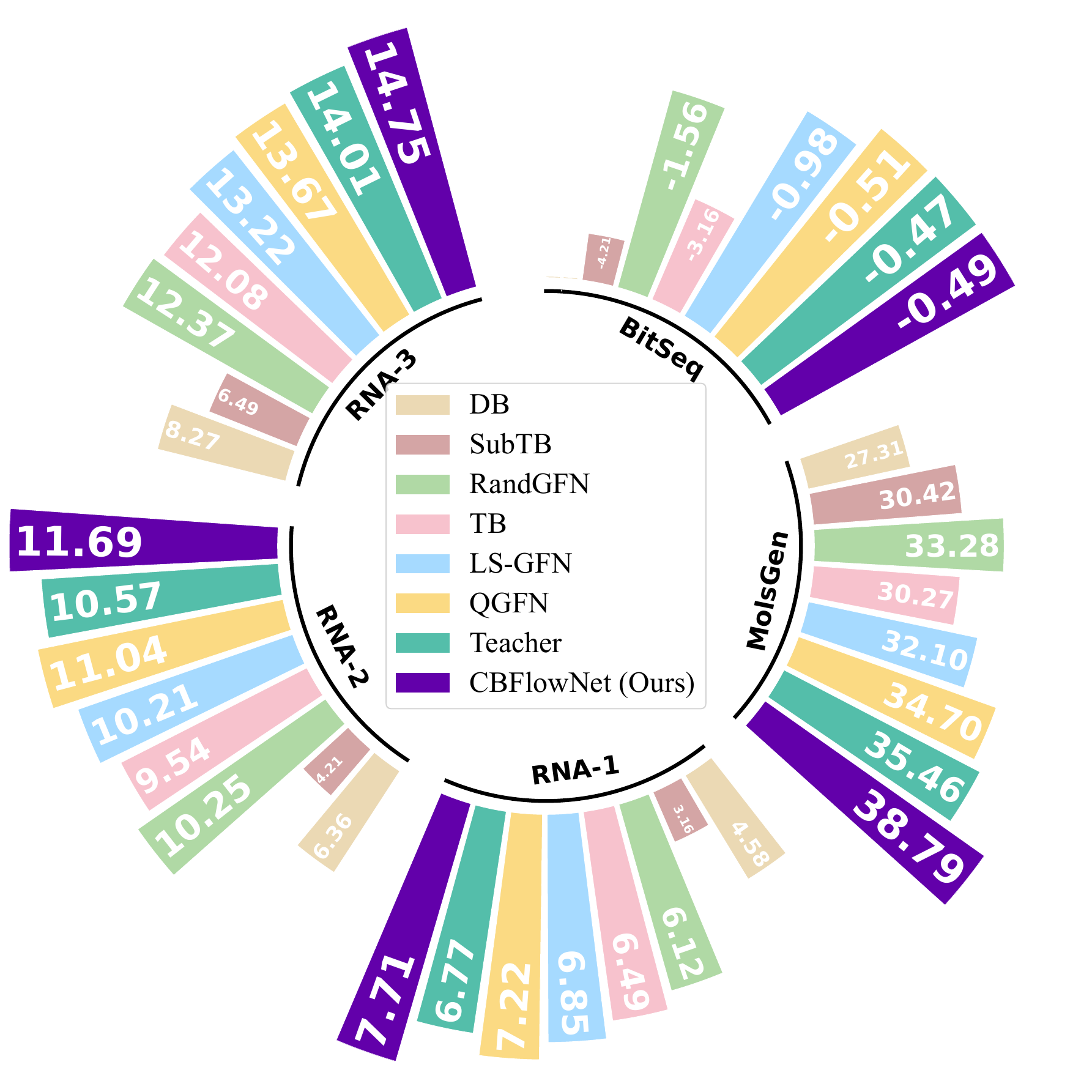}
    \caption{Experiment results on ELBO.}
    \label{fig:elbo}
\end{figure}
\section{Evidence Lower Bound (ELBO)}
\label{ELBO}
We evaluate the goodness of fit to the target distribution using the evidence lower bound (ELBO) introduced by~\citet{kim2024adaptive}. ELBO is estimated by sampling $M$ candidates and averaging the estimated $\log Z$ through a transformed TB objective:
\begin{equation}
Z \prod_{t=1}^n P_F(s_t|s_{t-1}) = F(X)\prod_{t=1}^n P_B(s_{t-1}|s_t).
\end{equation}
The corresponding ELBO is approximated as:
\begin{equation}
    ELBO \approx \frac{1}{M} \sum_{i=1}^M \Bigg( \log R(x_i) 
    + \sum_{t=1}^{n_i} P_B(s_{t-1}|s_t) 
    - \sum_{t=1}^{n_i} P_F(s_t|s_{t-1}) \Bigg),
\end{equation}
where $n_i$ is the length of the trajectory that generates terminal state $x_i$. Results are presented in Figure~\ref{fig:elbo}. The proposed {\modelname} achieves slightly better performance than the baselines, with Teacher remaining the strongest competitor.

\section{Scalability}
We show that the proposed method scales effectively to larger action spaces both theoretically and empirically. When selecting $K$ arms from $N$ total base arms to form a super arm, the accuracy of reward estimates $\hat{\mu}_i$ typically requires more rounds to converge as $N$ grows. However, in practice, $K$ often scales proportionally with $N$—for example, setting $K=0.1N$ (selecting $10\%$ of base arms). Under this scheme, the estimation accuracy of $\hat{\mu}_i$ remains stable with respect to $N$, leading to a fixed convergence rate.  

Moreover, the computation of rewards $\hat{\mu}_i$ can be embedded within the flow-matching updates at negligible cost. Since our algorithm is heuristic, the additional computational burden is minimal, as also indicated in Table~\ref{tab:timecost}.  

To further validate scalability, we experimented with enlarged action spaces. The molecule design task originally contains 105 building blocks with several stems each. By combining two actions into one base arm, the action space increases to $105 \times 105 = 11025$, denoted as {\modelname} (CA). As shown in Table~\ref{tab:performance_modes}, {\modelname} (CA) exhibits comparable or slightly better performance in discovering high-reward modes while incurring only negligible computational overhead.  

\begin{table}[ht]
\centering
\caption{Performance comparison of different methods.}
\resizebox{\textwidth}{!}{
\begin{tabular}{lcccccc}
\toprule
Method & Training Round & Modes $R > 7.5$ & Modes $R > 8$ & Top-1000 Reward & Top-1000 Similarity & Time (s / 100 rounds) \\
\midrule
TB-GFN          & $10^5$ & 1915  & 233  & 8.01 & 0.47 & 24.37 \\
{\modelname}       & $10^5$ & 12089 & 2952 & 8.31 & 0.49 & 27.95 \\
{\modelname} (CA)  & $10^5$ & 12433 & 2520 & 8.29 & 0.50 & 28.73 \\
\bottomrule
\end{tabular}
}
\label{tab:performance_modes}
\end{table}

\section{Experiments with Large Language Model Task}
We also report experiments on a task with dynamic reward distributions (see Section~\ref{limitation}) as a limitation study. Following~\citet{hu2023amortizing}, we considered a subjectivity classification task where each movie review is labeled as either objective or subjective. This task is particularly challenging, as it involves both the E-step and M-step of the EM algorithm, with GFlowNet serving as the inference model in the E-step.  

We adopt the default settings from the public implementation of~\citet{hu2023amortizing}. For fine-tuning, we tested both GPT-2 and GPT-J 6B backbones.  

\begin{table}[ht]
\centering
\caption{Comparison of {\modelname} and GFlowNet under GPT-2 and GPT-J backbones with different sample sizes. Results are reported as mean $\pm$ standard deviation.}
\begin{tabular}{lcccccc}
\toprule
\multirow{2}{*}{Method} 
& \multicolumn{3}{c}{GPT-2} 
& \multicolumn{3}{c}{GPT-J 6B} \\
\cmidrule(lr){2-4} \cmidrule(lr){5-7}
& 10 Samples & 20 Samples & 50 Samples 
& 10 Samples & 20 Samples & 50 Samples \\
\midrule
{\modelname} & 0.59 {\scriptsize$\pm$0.02} & 0.63 {\scriptsize$\pm$0.03} & 0.78 {\scriptsize$\pm$0.02} 
          & 0.71 {\scriptsize$\pm$0.02} & 0.83 {\scriptsize$\pm$0.01} & 0.90 {\scriptsize$\pm$0.01} \\
GFlowNet  & 0.58 {\scriptsize$\pm$0.03} & 0.61 {\scriptsize$\pm$0.02} & 0.75 {\scriptsize$\pm$0.03} 
          & 0.71 {\scriptsize$\pm$0.02} & 0.81 {\scriptsize$\pm$0.02} & 0.87 {\scriptsize$\pm$0.02} \\
\bottomrule
\end{tabular}
\label{tab:llm}
\end{table}

The results in Table~\ref{tab:llm} show that {\modelname} marginally outperforms GFlowNet in test accuracy across all training sample sizes. However, this task highlights a limitation of {\modelname}. The reward of a terminal state $Z$, defined as $p_{LM}(Z, Y \mid X)$, depends on both the label $Y$ and input $X$. For instance, the word \textit{factual} may yield a high reward when the label is ``objective'' but a low reward when the label is ``subjective.'' Thus, the high-reward state space shifts during training, which differs fundamentally from our original setting.  

We categorize base arms into four groups:  
A) high-scoring under ``objective'' and low-scoring under ``subjective'';  
B) high-scoring under ``subjective'' and low-scoring under ``objective'';  
C) consistently high-scoring;  
D) consistently low-scoring.  

Our framework is primarily designed to identify and filter arms of type D while retaining type C. In this dynamic setting, where types A and B fluctuate, we increased the base-arm set size $K$ to ensure sufficient coverage of relevant arms (types A, B, and D).

\end{document}